\definecolor{navyblue}{rgb}{0, 0, 0.50196} 
\definecolor{C0}{HTML}{1F77B4}
\definecolor{C1}{HTML}{FF7F0E}
\definecolor{C2}{HTML}{2CA02C}
\definecolor{C3}{HTML}{D62728}
\newcommand{\yesSymbol}{\textcolor{C2}{\checkmark}}
\newcommand{\noSymbol}{\textcolor{C3}{$\times$}}
\newcommand{\Note}[1]{}
\renewcommand{\Note}[1]{#1}  % comment out this definition to suppress all Notes
\def\1{\bm{1}}
\def\d{\mathrm{d}}
\def\fX{{\bm{X}}}
\def\bX{{\bm{X}}}
\def\Pm{\mathcal{A}}
\def\m{{\vy}} 
\def\A{{\mA}}
\def\bH{{\bm{H}}}
\def\bW{{\bm{W}}}
\def\ddf{\overrightarrow{\mathrm{d}}}
\def\rv{{\textnormal{v}}}
\def\dd{{\mathrm{d}}}
\def\vh{{\bm{h}}}
\def\vx{{\bm{x}}}
\def\vy{{\bm{y}}}
\def\mA{{\bm{A}}}
\def\mH{{\bm{H}}}
\def\mX{{\bm{X}}}
\def\mY{{\bm{Y}}}
\DeclareMathAlphabet{\mathsfit}{\encodingdefault}{\sfdefault}{m}{sl}
\SetMathAlphabet{\mathsfit}{bold}{\encodingdefault}{\sfdefault}{bx}{n}
\def\gN{{\mathcal{N}}}
\def\gO{{\mathcal{O}}}
\def\sR{{\mathbb{R}}}
\def\erf{{\text{erf}}}
\newcommand{\E}{\mathbb{E}}
\newcommand{\R}{\mathbb{R}}
\let\log\relax
\DeclareMathOperator{\log}{ln}
\def\P{{\mathbb{P}}}
\def\Q{{\mathbb{Q}}}
\def\R{{\mathbb{R}}}
\newcommand{\KL}{D_{\mathrm{KL}}}
\DeclareMathOperator*{\argmin}{arg\,min}
\definecolor{pearDark}{HTML}{2980B9}
\newcommand\cbox[1]{\colorbox{pearDark!20}{$#1$}}
\newcommand\cboxnew[1]{#1}
\renewcommand{\l}[1]{\ensuremath{{#1}_\theta}} % is learnable
\renewcommand{\c}[1]{\ensuremath{\mathbf{#1}}}    % has channels
\newcommand{\e}[1]{\ensuremath{\hat{#1}}}  % is estimator
\newcommand{\nn}{\ensuremath{\l{\c{f}}}}  % neural network
\newcommand{\noise}{\ensuremath{\bm{\varepsilon}}}  % sampled noise
\newcommand{\dist}{\ensuremath{\mathcal{P}}}  % distribution law
\newcommand{\distO}{\ensuremath{\dist_0}}  % noise distribution
\newcommand{\distT}{\ensuremath{\dist_T}}  % noise distribution
\newcommand{\distnoise}{\dist_\text{noise}}  % noise distribution
\newcommand{\distsampling}{\dist_\text{sampling}}  % sampling distribution
\newcommand{\distdata}{\dist_\text{data}}  % data distribution
\renewcommand{\rv}[1]{\ensuremath{\MakeUppercase{\c{#1}}}}  % random variable
\def\mx{{\bm{x}}}
\def\my{{\bm{y}}}
\newcommand{\fwd}[1]{%
  \tikz[baseline=(char.base)]{
    \node[inner sep=0pt, outer sep=0pt] (char) {$#1$};
    \draw[line width=0.2pt] ($(char.north west)+(0.05em,0.25em)$) -- ($(char.north east)+(0em,0.25em)$);
    \draw[line width=0.2pt] ($(char.north east)+(0em,0.25em)$) -- ($(char.north east)+(-0.15em,0.15em)$);
  }%
} 
\newcommand{\bwd}[1]{%
  \tikz[baseline=(char.base)]{
    \node[inner sep=0pt, outer sep=0pt] (char) {$#1$};
    \draw[line width=0.2pt] ($(char.north west)+(0em,0.25em)$) -- ($(char.north east)+(-0.05em,0.25em)$);
    \draw[line width=0.2pt] ($(char.north west)+(0em,0.25em)$) -- ($(char.north west)+(0.15em,0.15em)$);
  }%
} 
\newcommand{\law}[1]{\mathrm{Law}\left(#1\right)}
\colorlet{shadecolor}{gray!20}
\newcommand{\new}{\textcolor{blue}{(new)}}
\Crefname{algorithm}{Alg.}{Algs.}
\Crefname{equation}{Eq.}{Eqs.}
\Crefname{figure}{Fig.}{Figs.}
\Crefname{tabular}{Tab.}{Tabs.}
\Crefname{section}{Sec.}{Secs.}
\Crefname{proposition}{Prop.}{Props.}
\Crefname{appendix}{App.}{Apps.}
\Crefname{corollary}{Cor.}{Cors.}
\newtheorem{theorem}{Theorem}[section]
\newtheorem{proposition}[theorem]{Proposition}
\newtheorem{corollary}[theorem]{Corollary}
\newtheorem*{rep@theorem}{\rep@title}
\newcommand{\newreptheorem}[2]{%
\newenvironment{rep#1}[1]{%
 \def\rep@title{#2 \ref{##1}}%
 \begin{rep@theorem}}%
 {\end{rep@theorem}}}
\newif\ifcontrol
\newif\iffine
\title{A framework for conditional diffusion modelling with applications in protein design and inverse problems}
\author{%
  Kieran Didi*\\
  University of Cambridge \\
  \texttt{ked48@cam.ac.uk} \thanks{equal contributions}\\
  \And
  Francisco Vargas*\\
  University of Cambridge \\
  \texttt{fav25@cam.ac.uk} \\
  \AND
  Simon Mathis*\\
  University of Cambridge \\
  \texttt{svm34@cam.ac.uk} \\
  \And
  Vincent Dutordoir*\\
  University of Cambridge \\
  \texttt{vd309@cam.ac.uk} \\
  \And
  Emile Mathieu*\\
  University of Cambridge \\
  \texttt{ebm32@cam.ac.uk} \\
    \And
  Urszula Julia Komorowska \\
 University of Cambridge \\
  \texttt{ujk21@cam.ac.uk} \\
    \And
  Pietro Lio \\
  University of Cambridge \\
  \texttt{pl219@cam.ac.uk} \\
}
\begin{document}

\maketitle

\begin{abstract}
  % Highlight context for bio:
Many protein design applications, such as binder or enzyme design, require scaffolding a structural motif with high precision. Generative modelling paradigms based on denoising diffusion processes emerged as a leading candidate to address this \emph{motif scaffolding} problem and have shown early experimental success in some cases. 
% Highlight current state:
In the diffusion paradigm, motif scaffolding is treated as a conditional generation task, and several conditional generation protocols were proposed or imported from the Computer Vision literature. %in the recent literature
% Highlight need/research gap:
However, most of these protocols are motivated heuristically, e.g. via analogies to Langevin dynamics,
%or other schemas
and lack a unifying framework, obscuring connections between the different approaches.
% Highlight our contribution:
In this work, we unify conditional training and conditional sampling procedures under one common framework based on the mathematically well-understood \emph{Doob's h-transform}. This new perspective allows us to draw connections between existing methods and propose a new variation on existing conditional training protocols. We illustrate the effectiveness of this new protocol in both, \emph{image outpainting} and \emph{motif scaffolding} and find that it outperforms standard methods. 
\end{abstract}

\section{Introduction}
Denoising diffusion models are a powerful class of generative models where noise is gradually added to data samples until 
% the sample becomes approximately Gaussian distributed.
% all information is fully destroyed and 
they converge to pure noise.
The time-reversal of this noising process then allows to transform noise into samples. This process has been widely successful in generating high-quality images \citep{ho2020denoising} and has more recently shown promise in designing protein backbones that were validated in experimental protein design workflows \citep{watson2023novo}.

% One then learns the time reversal of this noising process, allowing to generate  samples that are approximately distributed according to the target data distribution.
Importantly for protein design, diffusion models allow to subject this time-reversed sampling process to a target condition. For proteins, a key condition is the inclusion of a structural motif that grants the protein a particular function, such as binding to a specific target or forming an enzyme active site. However, for these motifs to be foldable and stable, they often need to be integrated into a larger protein structure. While there have been notable successes in scaffolding some motifs experimentally, many still prove challenging to scaffold \citep{watson2023novo}. This makes the development of better conditional generation methods for diffusion models an active area of research, with several contributions from the computer vision, molecular and protein design communities in recent times.

%Recently, much interest has been raised in performing conditional sampling over this time-reversal process, so as to generate samples subject to an observed condition.
% condition the learnt reverse process to generate samples subject to a given condition (e.g.\ image inpainting).

For instance, several methods cast the conditional sampling problem as an inverse (posterior sampling) problem and propose adding a \emph{guidance} term to the time-reversal's drift (Fig.~\ref{fig:conditioning_overview}c) \citep[e.g.][]{ho2022Video,chung2022diffusion}.
Another line of work, focusing on `inpainting', suggests \emph{replacing} the observed variable in the diffused state~(Fig.~\ref{fig:conditioning_overview}b)~\citep[e.g.][]{song2021Scorebased,dutordoir2023neural,mathieu2023geometric}. Yet other work performs heuristic conditional training with the target variables in place \citep{watson2023novo,torge2023diffhopp}.
% a conditional score to the drift reverse SDE.
% This conditional score addition has not been motivated carefully and has been typically cast as a form of tempered Langevin dynamics. 
% \citep{mathieu2023geometric}.

In this work, we reinterpret the conditioning problem leveraging Doob's $h$-transform. This new perspective provides theoretical backing to existing approaches and naturally leads us to propose a novel method, which we call \emph{amortised training} (Fig.~\ref{fig:conditioning_overview}d, \cref{algo:cond_dobsh}).
We highlight the straightforward implementation and practical use of our theoretical framework by applying it to problems, first as a proof of concept in image generation. We then study the merits and shortcomings of our newly proposed \emph{amortised training} method in more detail for the \emph{motif scaffolding} problem in protein design. We do so by comparing an amortised training implementation of the small-scale diffusion model Genie \citep{lin2023generating} on the RFDiffusion benchmark as well as a newly proposed benchmark dataset based on the SCOPe classification \citep{chandonia2022scope}.

Our main contributions are as follows:
\begin{enumerate}[label=\itshape\roman*\upshape)]
\item We derive a formal framework for conditioning diffusion processes using Doob's $h$-transform (\cref{htrans}).
\item We use our framework to create a taxonomy of existing methods (\cref{tab:summary}).
\item Our taxonomy elucidated the absence of a specific method within the current literature, prompting us to develop and implement this novel approach (\cref{sec:amortised}).
\item We empirically assess these different approaches on image generation and protein design (\cref{sec:exp}).
\item Finally, we present plug-and-play algorithms to implement various conditioning schemes (\cref{app:algorithms}).
\end{enumerate}

\begin{table}[t]
\small
\centering
\begin{adjustbox}{width=\textwidth,center}
\begin{tabular}{llcccl}
\toprule
{\scshape \textbf{Method}} & {\scshape \textbf{Stage}} & \scshape {\textbf{Operator}}  & \multicolumn{2}{c}{{\scshape \textbf{Constraint}}} &  {\scshape \textbf{Framework}} \\
& & Leveraged & Soft & Hard \\ 
\midrule
Amortised $h$-transform (ours) & Training & \yesSymbol & \yesSymbol & \yesSymbol & Amortised trained $h$  \\
Classifier free \citep{ho2022classifier} & Training & \noSymbol  & \noSymbol & \yesSymbol & Amortised trained $h$\\
Replacement \citep{song2020score} & Sampling & \yesSymbol & \noSymbol & \yesSymbol & ? \\
$\quad$w/ particles: SMCDiff~\citep{trippe2022diffusion} & Sampling &  \yesSymbol & \yesSymbol & \yesSymbol & ? \\
RFDiffusion \citep{watson2023novo} & Training & \yesSymbol & \noSymbol & \yesSymbol & Marginal of $h$ \\
Classifier guidance \citep{dhariwal2021diffusion} & Finetuning &  \noSymbol & \noSymbol & \yesSymbol & Trained separate $p(\m|\bH_t)$ \\
Reconstruction guidance \citep{chung2022diffusion,chung2022improving} & Sampling & \yesSymbol & \yesSymbol & \yesSymbol & Moment matching $h$\\
$\quad$w/ particles: TDS \citep{wu2023practical} & Sampling  & \yesSymbol & \yesSymbol & \yesSymbol & Moment matching $h$\\
% {\scshape \textbf{Method}} & {\scshape \textbf{Stage}} & \multicolumn{2}{c}{\scshape {\textbf{Measurement}}}  & \multicolumn{2}{c}{{\scshape \textbf{Constraint}}} &  {\scshape \textbf{Framework}} \\
% & & {Unknown} & {Known} & Soft & Hard \\ 
% \midrule
% Amortised $h$-transform (ours) & Training & \noSymbol & \yesSymbol & \yesSymbol & \yesSymbol & Amortised trained $h$  \\
% Classifier free \citep{ho2022classifier} & Training & \yesSymbol & \kindofSymbol & \noSymbol & \yesSymbol & Amortised trained $h$\\
% Replacement \citep{song2020score} & Sampling & \noSymbol & \yesSymbol & \noSymbol & \yesSymbol & ? \\
% $\quad$w/ particles: SMCDiff~\citep{trippe2022diffusion} & Sampling & \noSymbol & \yesSymbol & \yesSymbol & \yesSymbol & ? \\
% RFDiffusion \citep{watson2023novo} & Training & \noSymbol & \yesSymbol & \noSymbol & \yesSymbol & Marginal of $h$ \\
% Classifier guidance \citep{dhariwal2021diffusion} & Finetuning & \yesSymbol & \kindofSymbol & \noSymbol & \yesSymbol & Trained separate $p(\m|\bH_t)$ \\
% Reconstruction guidance \citep{chung2022diffusion,chung2022improving} & Sampling & \noSymbol & \yesSymbol & \yesSymbol & \yesSymbol & Moment matching $h$\\
% $\quad$w/ particles: TDS \citep{wu2023practical} & Sampling & \yesSymbol & \yesSymbol & \yesSymbol & \yesSymbol & Moment matching $h$\\
% Langevin / Repaint \citep{lugmayr2022repaint,mathieu2023geometric} & inference & \noSymbol & \yesSymbol & \noSymbol & \yesSymbol \\

\bottomrule
% \vspace{.001em}
\end{tabular}
\end{adjustbox}
\caption{
Taxonomy of conditional methods.
{\scshape \textbf{Stage}} indicates when the conditional information is acquired.
% {\scshape \textbf{Conditioning}} describes the nature of the random variable, which can either be a \emph{subspace} of the state space (e.g.\ some image pixels) or represent \emph{auxiliary} data (e.g.\ image caption).
{\scshape \textbf{Operator}} indicates whether the measurement operator $\Pm$ is assumed to be known and thus leveraged by methods.
{\scshape \textbf{Constraint}} classifies the likelihood as either \emph{hard} or \emph{soft}, as detailed in the main text.
{\scshape \textbf{Framework}} specifies the mechanism by which conditioning is accomplished. The `?' means that it is unclear how this method fits into the $h$-transform framework.
}

\label{tab:summary}
\end{table}

\begin{figure}
    \centering
    \includegraphics[width=\textwidth]{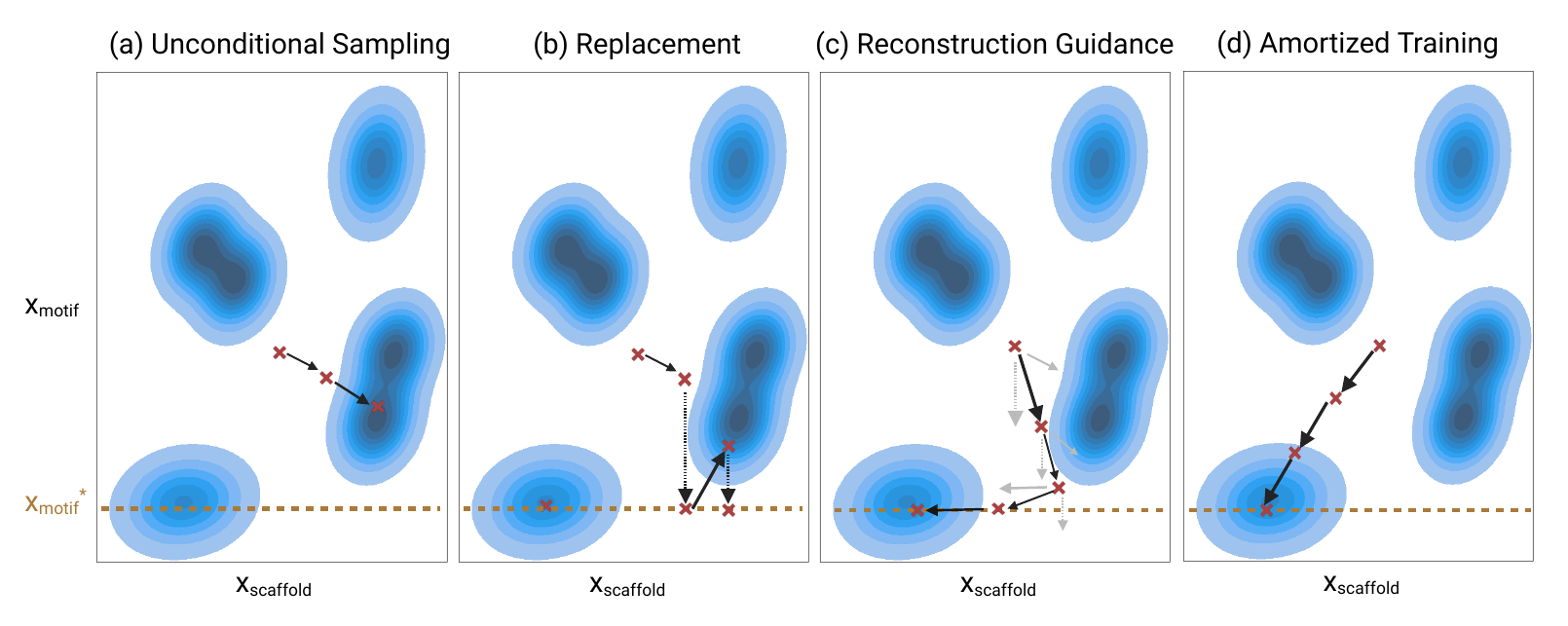}
    \caption{Schematic illustration of several common approaches to (conditionally) sample from a diffusion model. The sampling space is partitioned into motif coordinates (vertical) and scaffold coordinates (horizontal). The target motif is marked as $x_\text{motif}^\star$ and regions with plausible scaffolds are illustrated as blue blobs. A clear definition of each approach as pseudo-algorithm is given in \cref{app:algorithms}.} 
    \label{fig:conditioning_overview}
\end{figure}

% \section{DiDi - Conditioning diffusions via the $h$-transform}
\section{Theory: Conditioning diffusions via the $h$-transform, a new perspective}
\label{htrans}

% In this section, we introduce a general framework for conditional generative modelling with diffusion based on Doob's $h$-transform and show how some of the existing methods for conditional generative modelling arise as approximate instances of our proposed framework.
We first show how Doob's $h$-transform enables diffusion models to satisfy \emph{hard} equality constraints and then generalise this result to handle \emph{soft} constraints in the context of noisy observations.

\subsection{Doob's $h$-transform with hard constraint}
\label{sec:hard_guidance}
Doob's transform provides a formal mechanism for conditioning a stochastic differential equation (SDE) to hit an event at a given time.
Formally:

\begin{mdframed}[style=highlightedBox]
\begin{proposition}\label{prop:htrans}(Doob's $h$-transform \cite{rogers2000diffusions})
Consider the reverse SDE:
% \sm{new style:}
% \begin{align}
% \dd \rv{X}_t &= \fwd{f}_t(\bX_t) \, \dd t + \sigma_t \fwd{\dd \rv{W}}_t, \quad \rv{X}_0 \sim \distdata  \label{eq:fwd_sde}, \\
% \dd \rv{X}_t &= \bwd{f}_t(\bX_t) \, \dd t + \sigma_t\ \bwd{ \dd \rv{W}}_t, \quad \rv{X}_T \sim \distsampling
% \label{eq:back_sde},
% \end{align}
% \sm{old style:
% \begin{align}
% \dd \bY_t &= a_t(\bY_t) \,\dd t + \sigma_t  \fwd{ \dd \rv{W}}_t, \quad \fY_0 \sim  \mu\label{eq:fwd_sde}, \\
% \dd \bX_t &= b_t(\bX_t) \,\dd t + \sigma_t \bwd{ \dd \rv{W}}_t, \quad \bX_T \sim  \nu
% \label{eq:back_sde},
% \end{align}
\begin{align}
\dd \bX_t &= \bwd{b}_t(\bX_t) \,\dd t + \sigma_t \bwd{ \dd \rv{W}}_t, \quad \bX_T \sim  \distT
\label{eq:back_sde},
\end{align}
% $\fwd{p}_{s|t}$ and
where time flows backwards and with transition densities  $\bwd{p}_{t|s}$.
It then follows that the conditioned process $\mX_t | \mX_0 \in B$ is a solution of
\begin{align}
    % \dd \fZ_t &= (a_t(\fZ_t) +\sigma_t^2\nabla_{\mZ_t}\ln \fwd{P}_{T|t}^a(\mY_T \in A | \mZ_t))\,\dd t + \sigma_t  \fwd{ \dd \rv{W}}_t, \quad \fY_0 \sim  \mu  \label{eq:fwd_sde_h}, \\
\dd \bH_t &= \left(\bwd{b}_t(\bH_t) - \sigma_t^2 \cbox{\nabla_{\mH_t}\ln \bwd{P}_{0|t}(\mX_0 \in B | \mH_t) }\right) \,\dd t + \sigma_t \bwd{ \dd \rv{W}}_t, \quad \bX_T \sim  \distT \label{eq:back_sde_h},
\end{align}
such that $\law{\bH_s | \bH_t} = \fwd{p}_{s|t, 0}(\vh_s| \vh_t, \vx_0 \in B)$ and $\mathbb{P}(\bH_0 \in B) = 1$. % (similarly for a forward SDE).
\end{proposition}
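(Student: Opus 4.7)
The plan is to follow the classical derivation of Doob's $h$-transform: first identify the space-time harmonic function associated with the conditioning event, then apply Bayes' rule to the transition kernel, and finally read off the resulting drift modification via a Girsanov-type argument.

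First, I would define the $h$-function $h_t(\vx) = \bwd{P}_{0|t}(\bX_0 \in B \mid \bX_t = \vx)$. By the Markov property of $\bX_t$, the process $t \mapsto h_t(\bX_t)$ is a reverse-time martingale; equivalently, $h_t$ is space-time harmonic for the generator of the reverse SDE~\eqref{eq:back_sde}. This is the object whose log-gradient will appear as the additional drift term in~\eqref{eq:back_sde_h}.

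Next, I would compute the conditional (bridge-type) transition density via Bayes' rule. Using the Markov property of the unconditioned reverse process,
\begin{align}
\bwd{p}_{s \mid t, 0}(\vh_s \mid \vh_t,\, \bX_0 \in B) \;=\; \bwd{p}_{s \mid t}(\vh_s \mid \vh_t)\, \frac{h_s(\vh_s)}{h_t(\vh_t)}.
\end{align}
The factor $h_s/h_t$ is the Radon--Nikodym derivative between the conditioned and unconditioned laws restricted to $\mathcal{F}_s$; a standard calculation (either via Girsanov or via an infinitesimal-step expansion of the Gaussian kernel $\bwd{p}_{s|t}$) shows that this multiplicative reweighting induces an additive drift correction $-\sigma_t^2 \nabla_{\vh_t} \ln h_t(\vh_t)$, which is exactly the bracketed term in~\eqref{eq:back_sde_h}. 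The terminal property $\mathbb{P}(\bH_0 \in B) = 1$ is then immediate since $h_0(\vx) = \mathbf{1}_B(\vx)$, so the $h$-weight kills any path whose endpoint escapes $B$.

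The main obstacle I anticipate is twofold: (i) establishing enough regularity of $h_t$ so that $\nabla \ln h_t$ is well-defined and Ito's formula applies, and (ii) carefully tracking the sign and the factor of $\sigma_t^2$ under the reverse-time convention so that the correction matches~\eqref{eq:back_sde_h}. For (i), the standard workaround is to assume $h_t > 0$ and $C^{1,2}$ on the support of interest, or to approximate $\mathbf{1}_B$ by a smooth mollifier and pass to the limit. For (ii), the cleanest route is the direct infinitesimal-step calculation: expanding $\bwd{p}_{s \mid t}$ as a Gaussian transition with drift $\bwd{b}_t$ and covariance $\sigma_t^2 \, \dd t \, \mathbf{I}$ and multiplying by $h_s/h_t$ immediately reads off the correct sign of the drift modification, removing any bookkeeping ambiguity inherited from the reverse-time parameterisation.
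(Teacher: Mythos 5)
The paper does not actually prove Proposition~\ref{prop:htrans}: it is imported as a known result from Rogers and Williams~\cite{rogers2000diffusions}, and the only accompanying material is the informal intuition in Appendix~\ref{app:intuit}. That intuition takes a different route from yours: rather than working with the harmonic function $h_t$ and a Girsanov reweighting, the paper decomposes $\nabla \ln \bwd{P}_{0|t}(\mX_0 \in B \mid \cdot)$ via Bayes into a conditional score minus a prior score, and then observes that the resulting SDE is exactly the time-reversal of the \emph{forward} process re-initialised at the conditioned law $\fwd{P}_0(\cdot \mid \mX_0 \in B)$, citing~\cite{de2021simulating}. In other words, the paper ``proves'' the proposition by reducing it to a known time-reversal identity (reverse, condition at the endpoint, reverse again), whereas you prove it directly from the definition of the $h$-process.

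Your sketch is the standard textbook derivation and is sound as a plan: defining $h_t(\vx) = \bwd{P}_{0|t}(\bX_0 \in B \mid \bX_t = \vx)$, noting it is a (reverse-time) martingale along paths hence space-time harmonic for the generator of~\eqref{eq:back_sde}, writing the bridge transition as $\bwd{p}_{s|t}\,h_s/h_t$, and extracting the drift $-\sigma_t^2 \nabla \ln h_t$ either by Girsanov or by an infinitesimal Gaussian-kernel expansion. You are also right that the two genuine technical points are regularity of $h_t$ (needed so that $\nabla \ln h_t$ exists and It\^{o}'s formula applies; here $h_0 = \mathbf{1}_B$ is discontinuous, so one must mollify $B$ or work with $t>0$ and take limits) and the sign bookkeeping under the paper's reverse-time convention with $\bwd{\dd \rv{W}}_t$. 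The paper's route sidesteps the first issue by appealing to time-reversal results that already handle degenerate endpoints, and sidesteps the second by never writing a Girsanov density at all; your route is more self-contained but places the regularity burden squarely on $h_t$. Both are legitimate, and your plan contains no gap beyond the acknowledged need to carry out the mollification argument and the sign check carefully.
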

\end{mdframed}
\vspace{-0.2cm}
\vspace{-0.2cm}
%
% The above statement tells us that if we 
This says that by conditioning a diffusion process to hit a particular event $\mX_0 \in B$ at a boundary time (e.g.\ $t=0$), the resulting conditional process is itself an SDE with an $\cboxnew{\text{additional drift term}}$. % (highlighted in grey).
Furthermore, the resulting SDE will hit the specified event within a finite time $T$.
The function $\cboxnew{h(t, \bH_t) \triangleq \bwd{P}_{0|t}(\mX_0 \in B\mid \bH_t)}$ is referred to as the \emph{$h$-transform} \citep{rogers2000diffusions,de2021simulating}.
%
% Practically, Doob's transform adds a new drift to the SDE, which 
The $h$-transform drift decomposes into two terms via Bayes rule, a conditional and a prior score:
\begin{equation}
\cboxnew{\nabla_{\bH_t} \ln  \bwd{P}_{0|t}(\mX_0 \in B \mid \bH_t) = \nabla_{\bH_t} \ln  \fwd{P}_{t|0}(\bH_t \mid \mX_0 \in B) - \nabla_{\bH_t} \ln  {P}_{t}(\bH_t)},
\end{equation}
whereby the conditional score ensures that the event is hit at the specified boundary time, while the prior score ensures it is time-reversal of the correct forward process \citep{de2021simulating} (see \cref{app:intuit}).
% ensures we are reversing the correct SDE \citep{de2021simulating} (see \cref{app:intuit}).
% \kd{TODO: make this a bit more formal}

% Interestingly, this provides a unique intuition: the Doob's transform SDE is the time reversal of the forward SDE corresponding to Equation \ref{eq:back_sde}, that is the time reversal of  the forward SDE
% \begin{align}
% \dd \bX_t &= \fwd{b}_t(\bX_t) \,\dd t + \sigma_t \fwd{ \dd \rv{W}}_t, \quad \bX_0 \sim \fwd{P}_{0}(\cdot| \mX_0 \in B)
% \label{eq:for_sde},
% \end{align}
% coincides with the Doob transformed SDE in Equation \ref{eq:back_sde_h} \citep{de2021simulating}. 

% Thus we can view Doob's transform as the following series of steps:
% \begin{enumerate}
%     \item Time reverse the SDE we want to condition (Eq. \ref{eq:back_sde_h} to Eq. \ref{eq:for_sde}).
%     \item  Impose the condition via ancestral sampling from the conditioned distribution/posterior.
%     \item Time reverse once more to be in the same time direction as we started.
% \end{enumerate}

% \paragraph{Conditional sampling: inpainting \& outpainting}
\paragraph{Hard constraint}
% Here we give a brief example of Doob's $h$-transform.
% Here, as a motivation we briefly show that solving image inpainting can be framed as simulating a time-reversal process containing a Doob's h-tansform as \eqref{eq:back_sde_h}.
% The task of image inpainting amounts to sample an image $\mx \in \R^{d}$ such that it contains a given subset of pixels $\my \in \R^{n}$, i.e.\ $\my=\Pm \mx$, where $\Pm \in \{0,1\}^{n \times d}$ is a masking matrix which selects $n$ observed pixels.
%
% Given a pre-trained score network $\bwd{s}_t \approx \nabla \ln  {P}_{t}$, 
% \Cref{prop:htrans} equips us with a formal way to carry out inpainting:
% conditional generative modelling: % as done in image inpainting tasks: 
%
We now consider events of the form $\mX_0 \in B$ which are described by an equality constraint $\Pm(\bX_0) = \my$ with $\Pm$ a known \emph{measurement} (or \emph{forward}) operator and $\my$ an observation.
We will see concrete examples of $\Pm$ in \cref{sec:exp}.
\begin{mdframed}[style=highlightedBox]
\begin{corollary}\label{col:inpaint}% (hard constraint)
% Let $\Pm \in \{0,1\}^{n \times d}$ be a matrix which selects $n<d$ observed indices of a $d$-dimensional vector,
% that indexes into a vector (e.g.\ $\Pm \vx= \vx_{0:m}$) 
Consider the reverse SDE \eqref{eq:back_sde}, 
% \begin{align}
%     \dd \bX_t &= \bwd{b}_t(\bX_t)\,\dd t + \sigma_t \bwd{ \dd \rv{W}}_t, \quad \bX_T \sim  \dist_T  \label{eq:rev_sde}
% \end{align}
then it follows that
\begin{align}
\label{eq:rev_sde_htrans}
    \dd \bH_t &= ( \bwd{b}_t(\bH_t) - \sigma_t^2 \cboxnew{\nabla_{\mH_t}\ln \bwd{P}_{0|t}(\Pm(\mX_0) = \m \mid \mH_t) })\,\dd t + \sigma_t \bwd{ \dd \rv{W}}_t, % \quad \bX_T \sim  \dist_T
\end{align}
satisfies $\law{\bH_s | \bH_t} = \law{ \bX_s | \bX_t, \Pm (\bX_0) =\m}$ thus $\law{  \bH_0}  = \law {\bX_0 | \Pm (\bX_0) =\m}$.
\end{corollary}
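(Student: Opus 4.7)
The plan is to obtain Corollary~\ref{col:inpaint} as a direct specialisation of Proposition~\ref{prop:htrans} by choosing the hitting event to be $B = \{x \in \X : \Pm(x) = \m\}$. First I would substitute this $B$ into the drift correction, turning the generic $h$-transform $\nabla_{\mH_t}\ln \bwd{P}_{0|t}(\mX_0 \in B \mid \mH_t)$ into $\nabla_{\mH_t}\ln \bwd{P}_{0|t}(\Pm(\mX_0) = \m \mid \mH_t)$, which is exactly the drift appearing in \eqref{eq:rev_sde_htrans}. The two conclusions of Proposition~\ref{prop:htrans} then specialise immediately to the desired identities: $\law{\bH_s \mid \bH_t} = \law{\bX_s \mid \bX_t, \Pm(\bX_0)=\m}$ comes from the transition-density identity with $\{\bX_0 \in B\} = \{\Pm(\bX_0) = \m\}$, and $\law{\bH_0} = \law{\bX_0 \mid \Pm(\bX_0)=\m}$ follows by reading the first identity at $s=0$ together with $\mathbb{P}(\bH_0 \in B) = 1$.

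The main obstacle is that, when $\Pm(\bX_0)$ takes values in a continuous space, the event $\{\Pm(\bX_0) = \m\}$ is typically $\bwd{\mathbb{P}}$-null, so $\bwd{P}_{0|t}(\Pm(\mX_0) = \m \mid \mH_t)$ cannot be read as an elementary conditional probability and must be interpreted via disintegration. To handle this cleanly I would work with the conditional density of $\Pm(\bX_0)$ given $\mH_t$, which exists under mild regularity assumptions when $\Pm$ is linear or a smooth submersion (so that a co-area/change-of-variable argument applies), and then take $h(t,\mH_t)$ to be this density evaluated at $\m$. An equivalent route, which I expect to be conceptually cleanest for readers, is to approximate $B$ by the thickenings $B_\epsilon = \{x : \|\Pm(x) - \m\| \le \epsilon\}$, each of which is a set of positive probability to which Proposition~\ref{prop:htrans} applies verbatim, and then pass to the limit $\epsilon \to 0$; under standard regularity of $\Pm$ and of the transition densities $\fwd{p}_{t|0}$, the associated $h$-transform drifts converge to the one displayed in \eqref{eq:rev_sde_htrans}.

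Finally, I would verify that the regularity hypotheses implicit in Proposition~\ref{prop:htrans} (positivity and smoothness of the relevant transition densities so that $\nabla \ln h$ is well-defined along the trajectories of the conditioned process) survive the disintegration or the $\epsilon \to 0$ limit. Once this is secured, the rest of the argument is bookkeeping: the hard constraint $\Pm(\bH_0) = \m$ is inherited because $\mathbb{P}(\bH_0 \in B_\epsilon) = 1$ for every $\epsilon$, and the bridged transition law at intermediate times is obtained from the Markov property already provided by Proposition~\ref{prop:htrans}. I anticipate this measure-zero subtlety to be the only nontrivial step; the algebraic substitution $B \leftrightarrow \{\Pm(\bX_0)=\m\}$ is otherwise automatic.
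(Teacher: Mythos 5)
Your proof is correct, and it is in fact more careful than what the paper itself records: the paper presents Corollary~\ref{col:inpaint} as an immediate specialisation of Proposition~\ref{prop:htrans} with $B = \{\Pm(x) = \m\}$ and supplies no separate proof, glossing over precisely the measure-zero subtlety you flag. Your disintegration/thickening argument for interpreting $\bwd{P}_{0|t}(\Pm(\mX_0)=\m \mid \mH_t)$ as a conditional density (or as an $\epsilon\to 0$ limit of $h$-transforms to the positive-probability sets $B_\epsilon$) is exactly the missing justification needed to make the statement rigorous when $\Pm$ has continuous range, and it is standard in the diffusion-bridge literature. One further caveat you could surface, though it is a defect inherited from how Proposition~\ref{prop:htrans} is stated rather than a gap in your argument: the $h$-transformed SDE in the paper keeps the unconditional initial law $\bH_T \sim \distT$, whereas the exact conditional law $\law{\bH_0} = \law{\bX_0 \mid \Pm(\bX_0)=\m}$ would require initialising from $\law{\bX_T \mid \Pm(\bX_0)=\m}$; the equality is therefore exact only up to the usual large-$T$ mixing of the forward process (cf.\ the ``$\approx$'' the authors themselves use in Corollary~\ref{col:noisy_ou}).
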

\end{mdframed}
\vspace{-0.2cm}
% Note that if \eqref{eq:back_sde} is the time-reversal of a diffusion process \citep{song2020score}, then \eqref{eq:rev_sde_htrans} provides us with an SDE to do image inpainting where $\Pm(\mX_t) = \m$ specifies the pixels that we observe/condition on.
Sampling \eqref{eq:rev_sde_htrans} directly provides samples $\mx \sim p_{\mathrm{data}}$ which also satisfy $\Pm(\mx) = \my$.
Crucially, this SDE is guaranteed to hit the conditioning in finite time, unlike prior equilibrium-motivated approaches \citep{chung2022diffusion,meng2022diffusion,finzi2023user, song2022pseudoinverse,han2022card, dutordoir2023neural}. 

% Another difference to these approaches being that in this setting we are imposing a noiseless constraint (i.e.\ a noiseless inverse problem) whilst most prior work is cast as noisy inverse posterior sampling problems \citep{chung2022diffusion}. We will now consider an approximation of Doob's transform which allows us to place a family of existing conditional sampling approaches within our framework.

% \subsection{Conditional sampling: reconstruction guided diffusion models}
% \label{sec:hard_guidance}
\paragraph{Reconstruction guidance}
%
% To paraphrase \cref{col:inpaint} further, had we have access to the $h$-transform, then we would be able to solve the inpainting task.
To get better insight into the challenge of sampling via Doob's $h$-transform \eqref{eq:rev_sde_htrans} let us re-express the $h$-transform as
% \begin{align}
%     \bwd{P}_{0|t}(\Pm\mX_0 = \m| \cdot) = \int \mathds{1}_{\Pm \vx_0 = \m} (\vx_0)\bwd{p}_{0| t}(\vx_0| \cdot) \dd \vx_0
% \end{align}
\begin{align}
    \cboxnew{\bwd{P}_{0|t}(\Pm(\mX_0) = \m \mid \bH_t)} = \int \mathds{1}_{\Pm(\vx_0) = \m} (\vx_0)\bwd{p}_{0| t}(\vx_0| \bH_t) \dd \vx_0
\end{align}
where in the case of denoising diffusion models $\bwd{p}_{0| t}(\vx_0| \cdot)$ is the transition density of the reverse SDE \eqref{eq:back_sde}.
In practice, one does not have access to this transition density -- i.e. we can sample from this distribution, but we cannot easily get its value at a certain point. This makes it difficult to approximate the integral.
To alleviate this, a strand of recent works \citep{finzi2023user,song2022pseudoinverse,rozet2023score} have proposed Gaussian approximation of $\bwd{p}_{0| t}(\vx_0| \cdot) \approx \gN(\vx_0 ~\vert~ \E[\fX_0 |\fX_t = \cdot], \Gamma_t)$ leveraging Tweedie's formula and the already trained score network.
This line of work is referred as \emph{reconstruction guidance}.
% via moment matching with $\gN(\vx_0 ~\vert~ \E[\fX_0 |\fX_t = \cdot] , \mathrm{Cov}[\fX_0 |\fX_t])$, where the moments can be computed via Tweedie's formula utilising the already trained score network.
We note that whilst proposing to approximate the quantity $\cboxnew{\bwd{P}_{0|t}(\Pm(\mX_0) = \m| \cdot)}$,
they do not make the connection to Doob's transform and thus are unable to provide guarantees on the conditional sampling that \cref{col:inpaint} provides.
Overall, the Gaussian-based approximations of Doob's $h$-transform lead to reconstruction guidance-based approaches \citep{finzi2023user, rozet2023score,chung2022diffusion,han2022card,song2022pseudoinverse} $\dd \bH_t = \left( \bwd{b}_t(\bH_t) +\sigma_t^2 \nabla_{\bH_t} ||\m - \mathrm{A} \E[\fX_0 |\fX_t = \bH_t]  ||^2_{\Gamma_t}\right)\dd t + \sigma_t \bwd{ \dd \rv{W}}_t, $ $\bX_T \sim  \dist_T$, where $\Gamma_t$ acts as a guidance scale \citep{mathis2023nmd, rozet2023score}, and $\mathrm{A}$ is a matrix if $\Pm$ is linear otherwise $\mathrm{A} = \mathrm{d} \Pm(\E[\fX_0 |\fX_t = \bH_t])$.
% \begin{align}
%     \dd \bH_t &= \left( \bwd{s}_t(\bH_t) +\sigma_t^2 \nabla_{\bH_t} ||\m - \Pm \E[\fX_0 |\fX_t = \bH_t]  ||^2_{\Gamma_t}\right)\dd t + \sigma_t \bwd{ \dd \rv{W}}_t, \quad \bX_T \sim  \dist_T
% \end{align}

% \subsection{Conditional sampling: generalised $h$-transform for noisy inverse problems}
\subsection{Generalised $h$-transform for soft constraints}
\label{sec:soft_guidance}
In the previous \cref{sec:hard_guidance}, we showed how the $h$-transform allows for conditioning on \emph{hard} constraints, correcting the reverse process to satisfy some observation $P(\my|\mx_0) \propto \mathds{1}_{\Pm(\mX_0) = \m}(\mx_0)$.
% an event of the form $\Pm(\mX_0) = \m$, i.e.\ correcting the reverse process to satisfy a \emph{hard} constraint $P(\my|\mx_0) \propto \mathds{1}_{\Pm(\mX_0) = \m}(\mx_0)$.
Yet, many scenarios deal with \emph{soft} constraints, modelling noisy observation $\my = \Pm(\mx) + \eta$ with a density $p(\my|\mx_0)$, typically 
% Another scenario relevant to conditioning is the setting where we don't want to condition deterministically on $\Pm \bX_0$ but instead, we specify a likelihood of the form $p(\vy | \vx_0)$ 
with the goal of sampling from the posterior $p(\vx_0 | \mY = \vy) = p(\vy | \vx_0) p_{\mathrm{data}}(\vx_0) / p(\vy)$ as in noisy inverse problems \citep{song2021solving,chung2022diffusion,chung2022improving}.
In this section, we present a generalisation of the $h$-transform applicable to denoising diffusion models that build on results in \citep{vargas2021bayesian}:

\begin{mdframed}[style=highlightedBox]
\begin{proposition}(Noisy conditioning)
\label{prop:noisy}
Given the following forward SDE:
\begin{align} \label{eq:vpsde}
    \dd \bX_t &= f_t(\bX_t) \,\dd t + \sigma_t\;\dd\fwd{\bW_t}, \quad \bX_0 \sim  \distdata
\end{align}
it follows that the following reverse SDE with marginals $p_t$
\begin{align} 
    \bH_T &\sim  \law{\bX_T |\bX_0}\nonumber\\
    \dd \bH_t &= \left( f_t(\bH_t) - \sigma^2_t (\nabla_{\bH_t} \ln p_t(\bH_t)+ \nabla_{\bH_t} \ln p_{y|t}(\mY=\m|\bH_t))\right)\,\dd t + \sigma_t\; \bwd{ \dd \rv{W}}_t, \label{eq:rev_sde2trans_2}
\end{align}
satisfies $\law{\bH_0} = p(\vx_0 | \mY=\m)$
where $p_{y|t}(\mY=\m|\cdot) = \int p(\mY=\m|\vx_0)\bwd{p}_{0|t}(\vx_0 |\cdot) \dd \vx_0$.
\end{proposition}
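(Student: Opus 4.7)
The plan is to derive \cref{prop:noisy} by combining Bayes' rule on the initial distribution with Anderson's time-reversal formula applied to the \emph{conditional} forward process. The key structural observation is that, under the generative model $\mY \mid \bX_0 \sim p(\cdot \mid \bX_0)$, the observation $\mY$ depends on the trajectory $(\bX_t)_{t \geq 0}$ only through its initial state $\bX_0$, so conditioning on $\mY = \m$ preserves the Markov dynamics of \eqref{eq:vpsde} and merely reweights the starting law. This exactly parallels the hard-constraint case of \cref{col:inpaint}, with the indicator $\mathds{1}_{\Pm(\vx_0) = \m}$ replaced by the likelihood density $p(\m \mid \vx_0)$; equivalently, the generalised $h$-transform here is $h(t,\bH_t) = p_{y\mid t}(\m \mid \bH_t)$.

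First I would argue that the conditional process $(\bX_t \mid \mY = \m)$ satisfies the same SDE \eqref{eq:vpsde} but started from the posterior $p(\vx_0 \mid \m) \propto p(\m \mid \vx_0)\,p_{\mathrm{data}}(\vx_0)$. This uses $\mY \indep (\bX_t)_{t>0} \mid \bX_0$, so the forward transition kernels are unaffected. Next I would compute the conditional marginal via two applications of Bayes' rule together with Fubini,
\begin{equation*}
p_t(\vx_t \mid \m) = \int p_{t\mid 0}(\vx_t \mid \vx_0)\,p(\vx_0 \mid \m)\,\dd\vx_0 = \frac{p_t(\vx_t)\,p_{y\mid t}(\m \mid \vx_t)}{p(\m)},
\end{equation*}
where the second equality invokes the stated identity $p_{y\mid t}(\m \mid \vx_t) = \int p(\m \mid \vx_0)\,\bwd{p}_{0\mid t}(\vx_0 \mid \vx_t)\,\dd\vx_0$. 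Differentiating the logarithm yields the clean score decomposition
\begin{equation*}
\nabla_{\vx_t} \ln p_t(\vx_t \mid \m) = \nabla_{\vx_t} \ln p_t(\vx_t) + \nabla_{\vx_t} \ln p_{y\mid t}(\m \mid \vx_t).
\end{equation*}

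Finally I would apply Anderson's time-reversal to the conditional forward process: its reverse SDE has drift $f_t(\bH_t) - \sigma_t^2\,\nabla \ln p_t(\bH_t \mid \m)$ with diffusion $\sigma_t$, and substituting the score decomposition above recovers exactly \eqref{eq:rev_sde2trans_2}. Running this SDE from the true conditional terminal law $\bH_T \sim p_T(\cdot \mid \m)$ produces $\law{\bH_0} = p(\vx_0 \mid \m)$; the statement's initialisation $\bH_T \sim \law{\bX_T \mid \bX_0}$ is the standard approximation that is exact in the limit where the forward dynamics mix to a terminal distribution independent of $\vx_0$ (as in VP-type SDEs), so $p_T(\cdot \mid \m) \approx p_T(\cdot \mid \vx_0)$. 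The main obstacle is the rigorous justification of Anderson's time-reversal under the reweighted initial law: this requires enough regularity on $p(\m \mid \vx_0)$ to ensure that $p_t(\cdot \mid \m)$ is smooth, integrable, and has an integrable score, together with a Novikov-type condition so that the conditional score drift induces a well-defined SDE. This is precisely the technical content borrowed from \citep{vargas2021bayesian}, and once those hypotheses are in place the chain of identities above closes the proof.
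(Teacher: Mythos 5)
Your proposal is correct and follows essentially the same route as the paper's proof: both observe that conditioning on $\mY$ (which depends on the path only through $\bX_0$) leaves the forward dynamics intact and merely replaces the initial law $\pdata$ by the posterior $p(\vx_0\mid\m)$, then apply Bayes' rule to the resulting marginal so that $\pdata(\vx_0)$ cancels against the forward transition, yielding $p_t(\vx_t\mid\m)\propto p_t(\vx_t)\,p_{y\mid t}(\m\mid\vx_t)$ and hence the score decomposition that feeds into the time-reversed SDE. You make the appeal to Anderson's time-reversal and the terminal-law approximation explicit, whereas the paper leaves those final steps implicit, but the underlying argument and key cancellation are identical.
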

\end{mdframed}
\vspace{-0.2cm}
In short, the above results give a variant of the $h$-transform that allows to sample from noisy posteriors.
This provides theoretical backing to methodologies such as DPS \citep{chung2022diffusion}, in which the SDE  \eqref{eq:rev_sde2trans} is used to solve noisy inverse problems.
% One important observation that is made clear by \cref{prop:noisy} is that $|| \law{\bH_T} - \gN(0,I)||_{\mathrm{TV}}\leq \gO \big(e^{-{T}/{\bar{\beta}}}\big)$ for some $\bar{\beta} > 0$ \citep{de2022convergence,de2021diffusion}, showcasing how the conditional SDE inherits the rapid convergence guarantees of the OU process just like its unconditional counterpart.
%
\begin{mdframed}[style=highlightedBox]
\begin{corollary}%(Ornstein-Uhlenbeck (OU))
\label{col:noisy_ou}
Furthermore, for an Ornstein-Uhlenbeck (OU) forward process, i.e.\ with drift $f_t(\vx)=-\beta_t\vx$ and diffusion $\sigma_t = \sqrt{2\beta_t}$, we have that 
\begin{align} 
    \dd \bH_t \!=\! \!-\beta_t  \!\left(\bH_t\! +\! 2\nabla_{\bH_t}\! \ln p_t(\bH_t)\!+\! 2\nabla_{\bH_t} \!\ln p_{y|t}(\mY\!=\!\m|\bH_t)\right)\!\dd t \!\!+\! \!\sqrt{2\beta_t }\; \bwd{ \dd \rv{W}}_t, \; \bH_T \! \sim\!\gN(0, I)\label{eq:rev_sde2trans}
\end{align}
satisfies $\law{\bH_0} \approx p(\vx_0 | \mY=\m)$.
As such, $\bH_T$ inherits the rapid convergence guarantees of the OU process~\citep{de2022convergence,de2021diffusion}, in particular
 $|| \law{\bH_T} - \gN(0,I)||_{\mathrm{TV}}\leq \gO \big(e^{-{T}/{\bar{\beta}}}\big)$ for some $\bar{\beta} > 0$.
\end{corollary}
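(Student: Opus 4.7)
The plan is to specialise Proposition \ref{prop:noisy} to the Ornstein--Uhlenbeck case by direct substitution, and then separately invoke standard OU ergodicity to justify replacing the exact terminal law by $\gN(0, I)$ and to obtain the convergence rate.

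First I would substitute $f_t(\vx) = -\beta_t \vx$ and $\sigma_t^2 = 2\beta_t$ directly into the reverse SDE \eqref{eq:rev_sde2trans_2}. The drift $f_t(\bH_t) - \sigma_t^2 \nabla_{\bH_t} \ln p_t(\bH_t) - \sigma_t^2 \nabla_{\bH_t} \ln p_{y|t}(\mY=\m|\bH_t)$ collapses to $-\beta_t \bigl(\bH_t + 2\nabla_{\bH_t} \ln p_t(\bH_t) + 2\nabla_{\bH_t} \ln p_{y|t}(\mY=\m|\bH_t)\bigr)$, and the diffusion coefficient $\sigma_t$ becomes $\sqrt{2\beta_t}$, yielding \eqref{eq:rev_sde2trans} exactly. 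The only departure from Proposition \ref{prop:noisy} is the prescribed initialisation: the proposition asks for $\bH_T \sim \law{\bX_T \mid \bX_0}$, which depends on the unknown data point, so we replace it with the stationary distribution $\gN(0, I)$.

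Second, to control the resulting error, let $\Phi$ denote the (idealised) Markov kernel that runs \eqref{eq:rev_sde2trans} backward from time $T$ to time $0$. By the data-processing inequality, $|| \law{\bH_0} - p(\vx_0 \mid \mY=\m)||_{\mathrm{TV}} \leq ||\gN(0, I) - \law{\bX_T}||_{\mathrm{TV}}$, where $\law{\bX_T}$ is the forward marginal at time $T$ averaged over $\distdata$. The right-hand side is then bounded by the standard exponential contraction of the OU semigroup \citep{de2022convergence,de2021diffusion}: under mild moment assumptions on $\distdata$, $||\law{\bX_T} - \gN(0, I)||_{\mathrm{TV}} \leq C\exp(-T/\bar{\beta})$, with $\bar{\beta}$ determined by the integrated schedule $\int_0^T \beta_s\,\dd s$ and the log-Sobolev constant of the stationary distribution. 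Composing the two bounds gives the claim.

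The main obstacle is the second step: the data-processing inequality applies cleanly when $\Phi$ is the exact reverse kernel, whereas in practice one uses learnt approximations of both $\nabla_{\bH_t} \ln p_t$ and $\nabla_{\bH_t} \ln p_{y|t}$. A fully rigorous end-to-end error bound would additionally have to propagate these score-approximation errors via a Girsanov-type argument in the spirit of \citep{de2021diffusion}; the stated corollary isolates the initialisation component, which is the part that genuinely benefits from the OU structure and is otherwise unavoidable.
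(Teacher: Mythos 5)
Your proof is essentially the right fleshing-out; the paper itself does not give a separate proof of this corollary, treating it as an immediate specialisation of Proposition~\ref{prop:noisy}, so your direct-substitution plus data-processing argument is the natural reconstruction. One small but worth-noting slip: in the data-processing step, the law that should be compared against $\gN(0,I)$ is not ``the forward marginal at time $T$ averaged over $\distdata$'' but the forward marginal at time $T$ when the OU process is started from the \emph{posterior} $p(\vx_0\mid\mY=\m)$, i.e.\ $p_{T|y}(\cdot\mid\vy)=\int \fwd{p}_{T|0}(\cdot\mid\vx_0)\,p(\vx_0\mid\vy)\,\dd\vx_0$. This is the terminal law prescribed by Proposition~\ref{prop:noisy} and the one for which running the exact reverse drift back to time $0$ returns the posterior exactly (see the proof sketch in \cref{sec:genh}, where the reference SDE is explicitly initialised at the posterior). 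The conclusion is unaffected because the exponential OU contraction to $\gN(0,I)$ holds from any sufficiently regular initial law, and regularity of the posterior follows from that of $\distdata$ when the likelihood is bounded, but identifying the correct initial law is needed if one wants the data-processing inequality to yield equality in the zero-error limit. Your closing remark correctly isolates the scope of the corollary: it only bounds the initialisation mismatch, with score-estimation error requiring a separate Girsanov-type argument, which is consistent with the paper's use of the symbol $\approx$ rather than $=$.
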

\end{mdframed}

For a careful derivation derivation of \ref{prop:noisy} see Appendix \ref{sec:genh}, prior works apply Bayes rule to the score omitting several steps in between required to make this argument rigorous, whilst the result is simple we believe this is the first work to carefully formalise this.

% \begin{remark}(Noisy conditioning) \label{prop:noisy}Given the VP-SDE \citep{song2020score}:
% \begin{align} \label{eq:vpsde}
%     \dd \bX_t &= -\beta_t \bX_t \,\dd t + \sqrt{2\beta_t }\;\fwd{\bW_t}, \quad \bX_0 \sim  \distdata
% \end{align}
% it follows that the following reverse SDE 
% \begin{align} 
%     &\bH_T \sim  \law \left(\bX_T |\bX_0 \sim p(\vx_0|\vy)\right) \approx \gN(0, I) \nonumber\\
%     \dd \bH_t &= -\beta_t  \left(\bH_t + \nabla_{\bH_t} \ln p_t(\bH_t)+ \nabla_{\bH_t} \ln p_{y|t}(\mY=\m|\bH_t)\right)\,\dd t + \sqrt{2\beta_t }\; \bwd{ \dd \rv{W}}_t, \label{eq:rev_sde2trans}
% \end{align}
% satisfies $\law \bH_0 = p(\vx_0 | \mY=\m)$
% where $p_{y|t}(\mY=\m|\cdot) = \int p(\mY=\m|\vx_0)\bwd{p}_{0|t}(\vx_0 |\cdot) \dd \vx_0$.

% \end{remark}

%
% \section{Training objectives}
% \label{sec:training_objectives}
\subsection{Amortised training of $h$-transform}
\label{sec:amortised}

In this section, we propose an objective for learning Doob's $h$-transform at training time in an amortised fashion instead of enforcing the constraint during inference time as before in reconstruction guidance approaches.

Note that since $\cboxnew{\bwd{P}_{0|t}(\Pm(\mX_0) = \m| \mX_t =\vh)} = \fwd{P}_{t|0}(\vh |\Pm(\mX_0) = \m)p_0(\Pm(\mX_0) = \m) / p_t(\mX_t =\vh)$ we can re-express the Doob's transformed SDE of a reversed OU process as:
\begin{align}
    \dd \bH_t &= -\beta_t  \left(\bH_t + 2\nabla_{\bH_t} \ln \fwd{P}_{t|0}(\bH_t \vert \Pm(\mX_0) = \m )\right)\,\dd t + \sqrt{2\beta_t }\; \bwd{ \dd \rv{W}}_t, \;\; \bH_T \sim  \law{\bX_T}. \nonumber
\end{align}
% then all that is required is to learn the conditional score.
% Then one can directly learn the \emph{conditional} score. \emile{remove this sentence?}
\begin{mdframed}[style=highlightedBox]
\begin{proposition} \label{prop:train}
    The minimiser of
\begin{align}
f^*\! = \!\argmin_f\! \ \mathbb{E}_{\mY \sim p_{|\Pm,\fX_0},\Pm \sim p, \fX_0 \sim p_{\mathrm{data}}}\!\left[\!\int_0^T \!\!\!\!\!||f(t, \fX_t, \mY, \Pm) -\nabla_{\fX_t}\!\ln \fwd{p}_{t|0}(\fX_t|\fX_0)||^2 \dd t\!\right]%\nonumber
\end{align}
is given by the conditional score
% \begin{align}
    $f^*_t(\vh, \m,  \A) = \nabla_{\vh} \log \fwd{p}_{t|0}(\vh \vert \mY = \m , \Pm = \A).$
% \end{align}
\end{proposition}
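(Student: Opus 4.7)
The plan is to combine two standard ingredients: the $L^2$-projection characterisation of conditional expectation, and the denoising score matching identity that expresses a conditional marginal score as an expected transition score. Throughout I use the fact that the noising process depends only on $\fX_0$, so that $\fX_t \perp (\mY, \Pm) \mid \fX_0$; equivalently $\fwd{p}_{t|0}(\vh \mid \vx_0, \m, \A) = \fwd{p}_{t|0}(\vh \mid \vx_0)$, since $(\mY, \Pm)$ is built from $\fX_0$ together with independent measurement noise.

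First, by Fubini I would swap the outer expectation with the time integral and minimise pointwise in $t$. For each $t$, the functional
\[\E\bigl[\|f(t, \fX_t, \mY, \Pm) - \nabla_{\fX_t}\ln \fwd{p}_{t|0}(\fX_t \mid \fX_0)\|^2\bigr]\]
is a standard $L^2$-regression whose unique minimiser over measurable functions of $(\fX_t, \mY, \Pm)$ is the $L^2$-projection onto $\sigma(\fX_t, \mY, \Pm)$, that is, the conditional expectation of the target:
\[f^*_t(\vh, \m, \A) = \E\bigl[\nabla_{\fX_t}\ln \fwd{p}_{t|0}(\fX_t \mid \fX_0) \,\bigm|\, \fX_t = \vh,\, \mY = \m,\, \Pm = \A\bigr].\]

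Next I would evaluate this conditional expectation by integrating against the posterior of $\fX_0$ given $(\fX_t, \mY, \Pm)$ and applying Bayes' rule. The conditional independence above yields
\[p(\vx_0 \mid \vh, \m, \A) = \frac{\fwd{p}_{t|0}(\vh \mid \vx_0)\, p(\vx_0 \mid \m, \A)}{\fwd{p}_{t|0}(\vh \mid \m, \A)}.\]
Substituting and using the elementary identity $\nabla_{\vh}\fwd{p}_{t|0}(\vh \mid \vx_0) = \fwd{p}_{t|0}(\vh \mid \vx_0)\,\nabla_{\vh}\ln \fwd{p}_{t|0}(\vh \mid \vx_0)$, the resulting integral telescopes into the gradient of the log of the denominator, giving $f^*_t(\vh, \m, \A) = \nabla_{\vh}\ln \fwd{p}_{t|0}(\vh \mid \mY = \m, \Pm = \A)$, which is precisely the claim.

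The main thing to handle carefully is not really an obstacle: justifying the interchange of $\nabla_{\vh}$ with the $\vx_0$-integral, which follows from standard dominated-convergence arguments under the Gaussian-like regularity of $\fwd{p}_{t|0}$, and explicitly invoking the conditional independence $\fX_t \perp (\mY, \Pm) \mid \fX_0$. Once these are granted, the argument is the conditional analogue of the classical denoising score matching identity, with $\mY$ and $\Pm$ acting as additional conditioning variables that pass unchanged through the $L^2$-projection.
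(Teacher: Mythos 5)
Your proof is correct and takes essentially the same route as the paper: characterise the $L^2$ minimiser as the conditional expectation $\E[\nabla_{\fX_t}\ln \fwd{p}_{t|0}(\fX_t\mid\fX_0)\mid\fX_t,\mY,\Pm]$, expand it against the posterior over $\fX_0$ via Bayes' rule, cancel $\fwd{p}_{t|0}$ in numerator and denominator, and pull the gradient through to recover $\nabla_{\vh}\ln\fwd{p}_{t|0}(\vh\mid\m,\A)$. The only difference is presentational: you state explicitly the conditional independence $\fX_t\perp(\mY,\Pm)\mid\fX_0$ and flag the dominated-convergence step for interchanging $\nabla_{\vh}$ with the integral, both of which the paper uses implicitly.
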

\end{mdframed}
\vspace{-0.2cm}
% This provides us with a novel objective for learning the conditional generative model.
This is referred as \emph{amortised} learning for conditional sampling, since practically the neural network approximating the (conditional) score is amortised over $\Pm$ and $\m$, instead of learning a separate network for each condition.
% This approach has been taken in `classifier free guidance' in the setting of auxiliary variables ~\citep{ho2022classifier} and RFDiffusion~\citep{watson2023novo} to sample images or proteins given a subset of the pixels or atoms.
This approach is reminiscent of `classifier free guidance'~\citep{ho2022classifier} where the score network is amortised over some auxiliary variable (e.g.\ as in text-to-image models~\citep{pmlr-v139-ramesh21a}), or of RFDiffusion~\citep{watson2023novo} where proteins are designed given a specific subset motif.
Our framework is different to `classifier free guidance' as $\Pm$ is assumed to be known (e.g.\ an inpainting mask), and to RFDiffusion since the conditioning variable $\mY$ being a subset of $\bX$, is also being noised during training and denoised when sampling (see~\cref{algo:cond_dobsh}). Also note that due to its formulation, classifier guidance would be unable to noise a subset of $\bX$ (the motif) as we do.

\subsection{Conditional Finetuning - Learning the Generalised $h$-transform in Noisy Inverse Problems}
In Appendix \ref{app:stoch_control} we prove that the $h$-transform for the noisy setting can be formulated as the solution to the following optimisation problem:
\begin{align}
    f^* =\argmin_{f} \E_\Q\left[\int_0^T \beta_t|| f(\bH_t) ||^2 \dd t\right] - \E_{\bH_0 \sim \Q_0} [\ln p(\vy| \bH_0)] \label{eq:stoch}
 \end{align}
where $\bH_t$ follows the unconditioned score SDE with an added control $f$ and $f^*_t(\vh) = \nabla_{\vh}\ln \E_{\bX_0 \sim p_{0 |t}(\cdot|\vh)}[p(\vy| \bX_0) ] =\nabla_{\vh}\ln p_{y|t}(\vy |\vh)$ coincides with the generalised $h$-transform. This objective provides a way to learn the conditioned SDE from the unconditioned one, without making Gaussian approximations. In practice, this can be achieved by fine-tuning a pretrained unconditional score model to learn the conditional score (see Appendix \ref{app:stoch_control}). Furthermore, this objective only requires the availability of a single noisy measurement $\vy$.  

We have different options to optimise this finetuning objective:
\begin{itemize}
    \item Naive backpropagation: We can simulate the chain $\Q$ and backpropagate through the full chain. However, this comes with a high memory cost and is infeasible for high-dimensional image datasets. 
    \item Adjoint SDE: We can make use of the adjoint SDE to estimate the gradients 
    \item Use different loss functions: Instead of using this objective, we can instead use VarGrad/TrajectorieBalance objective with the same minimiser. 
\end{itemize}

\section{Experimental results}
\label{sec:exp}
% In this section we
% \Cref{prop:htrans} equips us with a formal way to carry out 
%
To compare the various conditional generation methods, we first highlight our results from initial tests in the image setting and then discuss the motif scaffolding problem in protein design in more detail.

\subsection{Conditional image generation.}
\label{sec:image_generation}
The task of `image outpainting' mimics the motif scaffolding problem in protein design and amounts to conditioning the diffusion model on a central patch of an image. The measurement model $\Pm \in \{0,1\}^{n\times d}$ will select $n$ central pixels out of an image in $\R^d$.
We consider noise-free conditions (i.e.\ hard constraints).
% sampling an image $\mx \in \R^{d}$ such that it contains a given (central) subset of pixels $\my \in \R^{n}$, i.e.\ $\my=\Pm(\mx) = \mathrm{A} \mx$, where $\Pm \in \{0,1\}^{n \times d}$ is a masking matrix which selects $n$ observed pixels.
We focus on the {\scshape CelebA}~\citep{liu2015faceattributes} and {\scshape Flowers}~\citep{nilsback08} image datasets.
We empirically evaluate the {\scshape Amortised} approach where the mask is provided at training time as an extra channel, along with {\scshape Reconstruction Guidance} (\cref{algo:uncond_Doob'sh}) and {\scshape Replacement} (\cref{algo:repaint}) methods for which the score network is trained without access to the mask, and are then queried at sampling time.
The quality of conditional samples is measured by the mean squared error (MSE) and LPIPS perceptual metric~\citep{zhang2018perceptual}. See \cref{app:experimental_details_images} for further details.
We empirically observe from \cref{tab:image-outpainting} that the \textsc{amortised} approach slightly outperforms sampling-based methods, which are on par with each other.

\begin{figure}[tbh]
\centering
\hspace*{-2cm}
\begin{minipage}[b]{0.45\textwidth}
\includegraphics[width=0.98\textwidth]{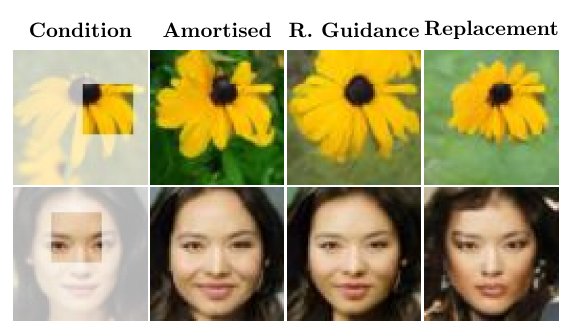}
\vspace{-0.1cm}
\caption{Some conditional samples.}
\end{minipage}
% \hfill
\begin{minipage}[b]{0.45\textwidth}

% \vspace{-0}
\adjustbox{max width=1.26\textwidth}{

\centering
% \small

\begin{tabular}{lccc}
% \begin{adjustbox}{width=\columnwidth}
\toprule
\scshape Metric & \scshape Amortised & \scshape R. Guidance & \scshape Replacement \\
\midrule
\textbf{\scshape Flowers} \\
MSE $(\downarrow)$ & $0.34_{\pm 0.01}$ & $0.27_{\pm 0.01}$ & $0.28_{\pm 0.01}$\\
LPIPS $(\downarrow)$ & $0.25_{\pm 0.00}$ & $0.29_{\pm 0.01}$ & $0.33_{\pm 0.01}$\\
\midrule
\textbf{\scshape CelebA} \\
MSE $(\downarrow)$ & $0.26_{\pm 0.01}$ & $0.30_{\pm 0.01}$ & $0.34_{\pm 0.00}$\\
LPIPS $(\downarrow)$ & $0.14_{\pm 0.00}$ & $0.15_{\pm 0.01}$ & $0.17_{\pm 0.00}$\\
\bottomrule
% \end{adjustbox}
\end{tabular}
}
\vspace{0.22cm}
% \vspace{-0.05cm}
\captionof{table}{Quantitative assessment of conditional samples w.r.t\ to ground-truth.}
\label{tab:image-outpainting}
\end{minipage}
\vspace{-0.2cm}
\end{figure}

\subsection{Conditional protein design: motif scaffolding}\label{sec:protein_design} 
The task of motif scaffolding in our protein setting amounts to sampling protein C alpha atom coordinates $\mx \in \R^{d}$ such that it contains a given subset of C alpha coordinates $\my \in \R^{n}$, i.e.\ $\my=\Pm(\mx) = \mathrm{A} \mx$, where $\Pm \in \{0,1\}^{n \times d}$ is a masking matrix which selects $n$ observed C alpha coordinates.
We perform two sets of motif scaffolding experiments. We firstly compare our proposed {\scshape amortised} approach to {\scshape replacement} and {\scshape reconstruction guidance} as we did in the image case. Upon observing that {\scshape amortised} performs significantly better, we then dive into a more detailed analysis of this method on the RFDiffusion benchmark, as well as a new SCOPe-based benchmark that is created from a hierarchical structure and sequence-based split described below.

\begin{figure}[tbh]
\centering
\hspace*{-2cm}
\begin{minipage}[b]{0.45\textwidth}
\includegraphics[width=0.98\textwidth]{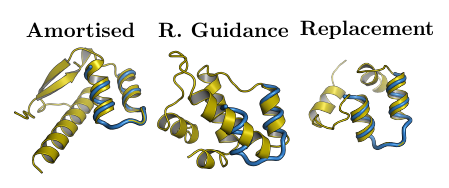}
% \put(-80,10){\rotatebox{90}{hello}}
% \caption{Samples conditioned on inner patch.}
\vspace{-0.3cm}
\caption{Conditional protein designs in \newline yellow with target motif \texttt{3IXT} in blue.}
\end{minipage}
% \hfill
\hspace*{0.1cm}
\begin{minipage}[b]{0.45\textwidth}

\adjustbox{max width=1.26\textwidth}{
\centering
% \small
% \vspace{-0}
\begin{tabular}{lccc}
\toprule
\scshape Metric & \scshape Amortised & \scshape R. Guidance & \scshape Replacement \\
\midrule
%\textbf{\scshape SCOPe} \\
%\% Designable $(\uparrow)$ & $0.34_{\pm 0.01}$ & $0.27_{\pm 0.01}$ & $0.28_{\pm 0.01}$\\
%LPIPS $(\downarrow)$ & $0.25_{\pm 0.00}$ & $0.29_{\pm 0.01}$ & $0.33_{\pm 0.01}$\\
%\midrule
\% Success $(\uparrow)$ & $20.0$ & $1.5$ & $0.5$\\
\% scRMSD < 2 \AA $(\uparrow)$ & $35.1$ & $45.1$ & $0.6$\\
\% mRMSD < 1 \AA $(\uparrow)$ & $50.0$ & $4.2$ & $24.3$\\

\bottomrule
\end{tabular}
}
\vspace{0.45cm}
% \vspace{0.05cm}
\vspace{-0.3cm}
\captionof{table}{{\scshape RFDiff} benchmark metrics (averaged over the 11 targets, 100 samples each). \\Success: pAE < 5, scRMSD < 2\AA, \newline motifRMSD < 1\AA, pLDDT > 70, scTM > 0.5. Details in \cref{sec:protein_design}.}
\label{tab:image-outpainting_prot}
\end{minipage}
\vspace{-0.2cm}
\end{figure}
\label{fig:protein_comp}

\paragraph{Data}\label{app:scope_data}
We evaluate on the RFDiffusion motif benchmark \citep{watson2023novo} and on a self-curated SCOPe benchmark based on a hierarchical structure-based split, jointly with a sequence-similarity-based split.
For the RFDiffusion benchmark, we tested all sequence-contiguous motifs, resulting in 11 different motif design tasks.
Our method readily extends to the non-contiguous motif setting and future work will address this in more detail.
The performance on each of these targets is depicted in~\cref{fig:rfdiff_comp}.
For the SCOPe dataset, we leverage the hierarchical structure classification scheme of the SCOPe database \citep{chandonia2022scope} to create train-test splits that allow us to investigate how well the model can scaffold motifs from unseen folds, families and superfamilies and how difficult these tasks are with respect to each other. In particular, for training, we hold out four clusters of protein structures at the fold level, four at the family level and four at the superfamily level (Fig.~\ref{fig:scope_overview}a) and evaluate the motif-scaffolding performance of the model on this structure-based hold-out set (Fig.~\ref{fig:scope_overview}b-d).

\paragraph{Diffusion process}
We use a discrete-time DDPM~\citep{ho2020denoising} formulation for the diffusion model with $N=1000$ steps and cosine $\beta$-schedule \citep{dhariwal2021diffusion}.

\paragraph{Noise model}
The denoising model $\varepsilon_\theta$ is adapted from the Genie diffusion model \citep{lin2023generating}. In Genie, the denoiser architecture consists of an SE(3)-invariant encoder and an SE(3)-equivariant decoder. While the network uses Frenet-Serret frames as intermediate representations, the diffusion process itself is defined in Euclidean space over the C alpha coordinates. Similar to AlphaFold2, the denoiser network consists of a single representation track that is initialised via a single feature network and a pair representation track that is initialised via a pair feature network. These two representations are further transformed via a pair transform network and are used in the decoder for noise prediction via IPA \cite{jumper2021highly}.\\ To evaluate unconditional sampling-based methods, we retrained the Genie denoising network for 4000 epochs on 4 A100 GPUs ($\sim$300 A100 hours in total). We stopped training at this point, as we observed an almost comparable performance to the publicly available model weights (which were obtained after training for 50'000 epochs).\\To evaluate the {\scshape Amortised} approach (\cref{algo:cond_dobsh}), we perform a minor modification to the unconditional Genie model by adding an additional conditional pair feature network that takes the motif frames as input with the ground truth coordinates for the motif and 0 as values for all other coordinates that are not part of the motif. The output of this motif-conditional pair feature network is concatenated with the output of the unconditional pair feature network to form an intermediate dimension of twice the channel size compared to the unconditional model, before being linearly projected down to the channel size of the unconditional model. From then onward the output is processed by the remaining Genie components as in the unconditional model. The implementation is therefore similar to the image case, where the motif features are presented as additional input and the model learns to use these for reconstructing the motif.
This minor alteration of the Genie architecture means our amortised network has $4.162$M parameters while the unconditional Genie networks have $4.087$M parameters ($\sim$ 1.8\% fewer).

\paragraph{Methods}
In the amortised setting we follow the pseudo-code definition given in~\cref{algo:cond_dobsh}. In 80\% of the training steps, we pass a condition to the network. The other 20\% contains an empty mask consisting of only 0's.
For the reconstruction guidance method (\cref{algo:uncond_Doob'sh}), we use a time-dependent guidance term of $\gamma_t=\alpha_t (1-\alpha_t)$. %$\citep{finzi2023user}.

\begin{figure}[htb!]
    \centering
    \includegraphics[width=\textwidth]{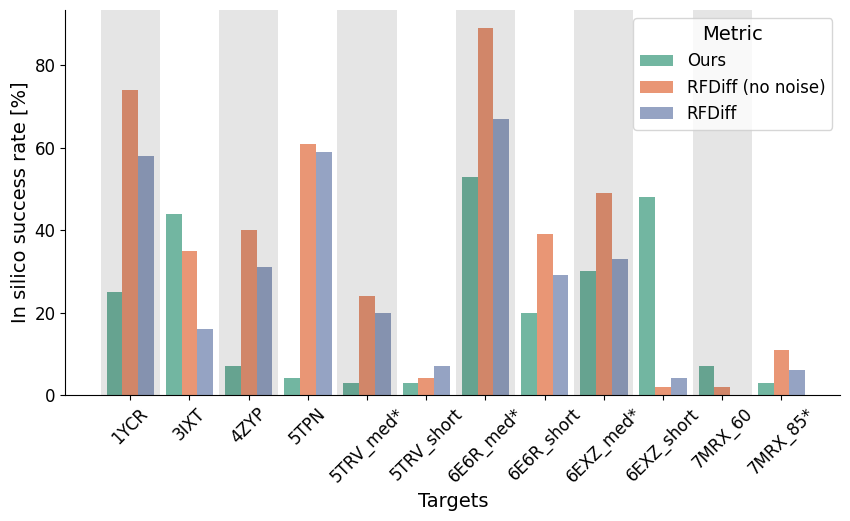}\vspace{-0.3cm}
    \caption{Comparison of our method to RFDiffusion for motif scaffolding for 12 continuous targets. Note that we trained our 4.1M parameter model for only 4000 epochs ($\sim$300 A100 hours in total), which is significantly less both in compute and parameter size than RFDiffusion ($\sim$26'000+ A100 hours, 59.8M parameters). For the motifs marked with *, we had to shorten the sampled scaffold ranges on both sides of the motif from 0-65 (0-63 for TMRX80) to 0-50 since we trained our version of Genie only on protein generation up to a length of 128 residues. Performance numbers from RFDiffusion are taken from the original publication \cite{watson2023novo} and our designs were created with the same design specifications as described there. We note that our folding step uses ESMFold instead of AlphaFold2, but we have future plans to use AlphaFold2 for a more direct comparison.}
    \label{fig:rfdiff_comp}
\end{figure}

\paragraph{Metrics}
We measure the performance of the methods across two axes: designability and success rate. 
%using metrics measuring the designability of the samples and metrics measuring the motif scaffolding success.

To assess whether a particular protein scaffold is \emph{designable}, we run the same pipeline as \cite{lin2023generating}, consisting of an inverse folding generated $C_\alpha$ backbones with ProteinMPNN and then re-folding the designed sequences via ESMFold. The considered metrics and their corresponding thresholds are the following:

\begin{itemize}

    \item scTM > 0.5: This refers to the TM-score between the structure that's been designed and the predicted structure based on self-consistency as previously described. The scTM-score ranges from 0 to 1. Higher scores indicate a higher likelihood that the input structure can be designed. 
    %Often, a threshold of 0.5 is selected, and the percentage of samples exceeding this threshold is reported.

    \item scRMSD < 2 \AA~: The scRMSD metric is akin to the scTM metric. However, it uses the RMSD (Root Mean Square Deviation) to measure the difference between the designed and predicted structures, instead of the TM-score. This metric is more stringent than scTM as RMSD, being a local metric, is more sensitive to minor structural variances.

    \item pLDDT > 70 and pAE < 5: Both scTM and scRMSD metrics depend on a structure prediction method like AlphaFold2 or ESMFold to be reliable. Hence, additional confidence metrics such as pLDDT and pAE are employed to ascertain the reliability of the self-consistency metrics.
\end{itemize}

In addition, we want to judge whether the motif scaffolding was successful or not. Therefore, similar to previous work by \cite{watson2023novo}, we calculate the motifRMSD between the predicted design structure and the original input motif and judge samples with < 1 \AA~motifRMSD as a successful motif scaffold.

\paragraph{Results}
We evaluate all three approaches on the continuous motifs from the {\scshape RFDiff}usion motif benchmark \citep{watson2023novo}. For the {\scshape Amortised} approach we retrain the Genie model \citep{lin2023generating} in an amortised fashion (Alg.~{\ref{algo:cond_dobsh}}), while for the {\scshape R. Guidance} and {\scshape Replacement} methods we used the publicly available unconditional model.
We observe that amortised training outperforms the other approaches, especially replacement sampling (Fig.~\ref{fig:protein_comp}).

% and on a self-curated {\scshape SCOPe} benchmark based on a hierarchical structure-based split (App.~\ref{app:scope_data}).
%

% \begin{figure}
%     \centering
%     \includegraphics[width=0.5\textwidth]{Figs/6e6r_motifs.pdf}
%     \caption{Image Generation via Doob's $h$-transform sampling and training.}
%     \label{fig:image_generation}
% \end{figure}

% \section{Conditional Protein Design: Motif Scaffolding}
% \label{protein_design}

% \begin{figure}
%     \centering
%     \includegraphics[width=0.5\textwidth]{Figs/6e6r_motifs.pdf}
%     \caption{Protein Motif Scaffoldin via Doob's $h$-transform sampling and training.}
%     \label{fig:protein_design}
% \end{figure}

To better understand how well the {\scshape amortised} conditioning approach works, we break down our model performance on the different targets and compare it to the performance of RFDiffusion (Fig.~\ref{fig:rfdiff_comp}). Despite having trained a smaller model with fewer computing resources, we obtained competitive performance on several targets.

We also ablate our model performance w.r.t. structural dissimilarity of the motif compared to the training set via our previously described SCOPe benchmark.
Testing the motif-scaffolding performance of the amortised model on this data, we see that the scaffolding success decreases from fold-over family to superfamily, indicating that scaffolding a motif from a protein that is more different to the training set is harder (Fig.~\ref{fig:scope_overview}b-c).
We also quantitatively observe an anecdotal phenomenon in protein design: while alpha helices are relatively easy to scaffold, domains from other classes have significantly lower success rates (Fig.~\ref{fig:scope_overview}d).
We hope that this benchmark set will help to address these issues in future modelling efforts.

\begin{figure}
    \centering
    \includegraphics[width=\textwidth]{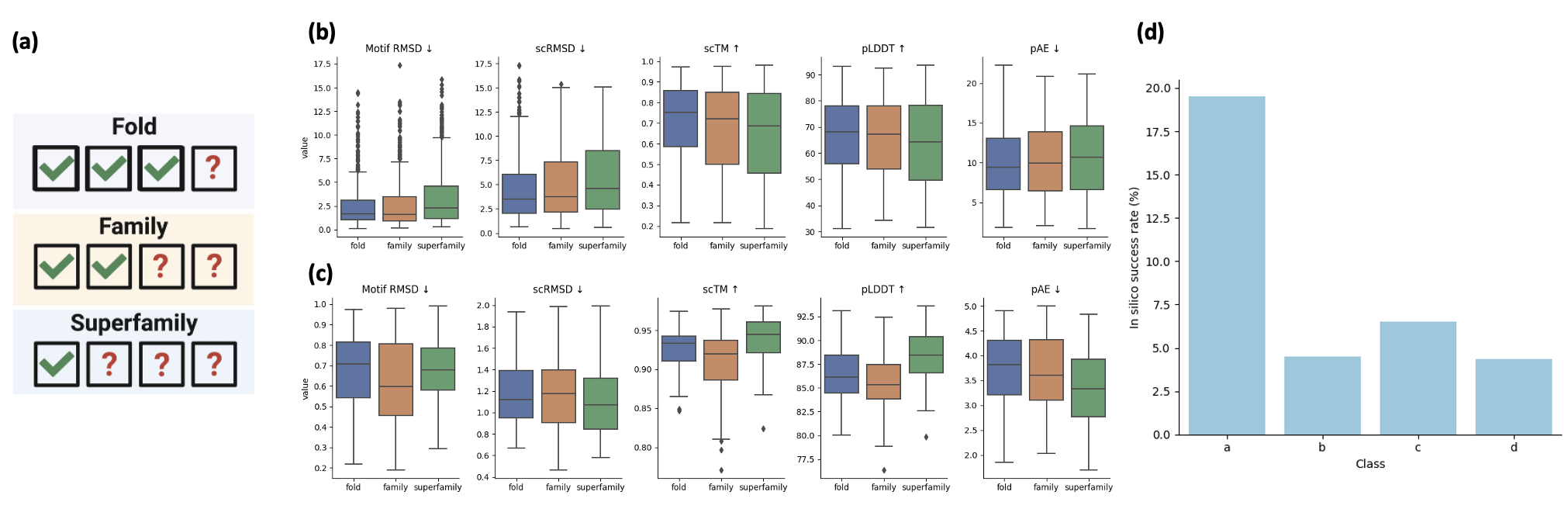}
    \vspace{-0.4cm}
    \caption{Data ablation study on a newly curated SCOPe benchmark dataset with our amortised training model. (a) We utilise the hierarchical structural clustering of SCOPe to create hold-out sets at three different levels of structural hierarchy: the fold, the family and the superfamily level. (b) We test the motif scaffolding performance on these splits and see decreasing scaffolding success for structurally dissimilar samples. (c) The same metrics as in (b), but only for samples that fulfill the definition of in silico success. (d) Scaffolding success by SCOPe class. Alpha helices can be scaffolded successfully, whereas other classes are more challenging.}
    \label{fig:scope_overview}
\end{figure}
% \section{Discussion}
% \label{discussion}

\section{Conclusion}

We presented a unified framework, based on Doob's $h$-transform, to better understand and classify different conditional diffusion methods.
Based on the gained insights, we developed a novel {\scshape amortised} conditional sampling scheme (\cref{algo:cond_dobsh}) which differs from existing approaches in that it takes into account the measurement operator. For the motif scaffolding task this means we denoise both the scaffold and the motif.
We evaluated the {\scshape amortised} approach on image outpainting and motif scaffolding in protein design and outperform standard methods. 
We further investigated the performance of the {\scshape amortised} approach by comparing to RFDiffusion on contiguous motifs.
Surprisingly, our {\scshape amortised} implementation of Genie \citep{lin2023generating} achieves notable \emph{in-silico} success rates of between 3 -- 50\% (as per the RFDiffusion definition) across the targets.
Though it lags behind RFDiffusion in 9/12 targets, it achieves this without low-temperature sampling, a mere 10\% of RFDiffusion's parameter count, and after being trained for just 1.2\% of the time. This positions the {\scshape amortised} approach as a promising candidate for further improving motif scaffolding, potentially opening up new applications in protein engineering for drug discovery and enzyme design.

\bibliographystyle{plainnat}
\bibliography{bib}
%%%%%%%%%%%%%%%%%%%%%%%%%%%%%%%%%%%%%%%%%%%%%%%%%%%%%%%%%%%%
\newpage

\onecolumn
\appendix

\section{Background on diffusion formulations}

\subsection{Continuous and discrete diffusion formulations}

The discretised DDPM versions with various discrete time schedules amount to the time-dependent OU process
% \begin{align}\label{eq:OUprocess}
%     \dd \rv{X}_t &= -\fwd{f}(\rv{x}_t, t) \dd t + \sigma(t) \fwd{\dd \rv{W}}_t \\
%     \fwd{f}(\mathbf{x}_t, t) &= \frac{\beta(t)}{2} \mathbf{x}_t \\
%     \sigma(t) &= \sqrt{\beta(t)},
% \end{align}
% The above is a bit redundant ($f$ serves little purpose especially that minus sign in $-f$) maybe consider:
\begin{align}\label{eq:OUprocess}
    \dd \rv{X}_t &= -\frac{\beta(t)}{2} \mathbf{x}_t  \dd t + \sqrt{\beta(t)} \,\fwd{\dd \rv{W}}_t 
\end{align}
where choosing different time schedules amounts to choosing different functions $\beta(t)$. This process gives rise to the Green's function for transition probabilities
\begin{align}
p(\mathbf{x},t | \mathbf{x}_0, 0) 
&= \fwd{p}_{t|0}(\mathbf{x} | \mathbf{x}_0) \\
&= \mathcal{N}\left(\mathbf{x}_0 e^{-\int_0^T \frac{\beta(s)}{2} \dd s}, \int_0^T \beta(t) e^{-\int_0^{T-t} \beta(s) \dd s} \dd t \right) \\
&= \mathcal{N} \left(\mathbf{x}_0 e^{-\int_0^T \frac{\beta(s)}{2} \dd s}, \left(1 - e^{-\int_0^{T} \beta(s) \dd s} \right) \right).
\end{align}
With $\bar{\alpha}(t) = e^{-\int_0^{T} \beta(s) \dd s}$, this is the familiar form (\cite{ho2020denoising}):
\begin{align}
p(\mathbf{x}, t | \mathbf{x}_0, 0) = \fwd{p}_{t|0}(\mathbf{x} | \mathbf{x}_0) = \mathcal{N}\left( \mathbf{x}_0 \sqrt{\bar{\alpha}(t)}, (1-\bar{\alpha}(t)) \right),
\end{align}
with $\bar{\alpha}(t)$ time-dependent and we can therefore choose different functional forms for the noise schedule by either choosing the transition parameters $\beta(t)$ or the cumulative parameters $\alpha(t)$.

If we define the noise schedule in terms of $\beta(t)$, the time-dependent OU process is immediately apparent (see \eqref{eq:OUprocess}).

If we define the noise schedule in terms of $\bar{\alpha}(t)$, the mean and variance of the corresponding OU process can simply be obtained from
\begin{align}
\beta(t) = - \frac{\dd}{\dd t}\left[ \ln{{\bar{\alpha}(t)}} \right].
\end{align}

\subsection{Score, noise and mean diffusion formulations}

The score-based model used for generation at inference time can be parametrised to model different quantities. The three most common one are the score, the noise and the mean.

When starting from the DDPM formulation of describing the diffusion process as a Gaussian linear Markov chain, it is natural to let the network predict the mean of this Gaussian, with the covariance being a fixed parameter: 

\begin{align}
\mu_{\theta} (\mathbf{x}_t, t) = \frac{1}{\sqrt{\alpha_t}} (\mathbf{x}_t - \frac{1- \alpha_t}{\sqrt{1- \bar{\alpha}(t)}} \noise_t)
\end{align}

However, we have access to the input $\mathbf{x}_t$ at training time and can therefore reparameterize the Gaussian in order to make our network predict the noise $\noise_t$instead of the mean $\mu_t$:

\begin{align}
\mathbf{x}_{t-1} = \mathcal{N}(\mathbf{x}_{t-1}; \frac{1}{\sqrt{\alpha_t}} (\mathbf{x}_t - \frac{1- \alpha_t}{\sqrt{1- \bar{\alpha}(t)}} \noise_{\theta}(\mathbf{x}_t,t))
\end{align}

When starting from the score-based SDE formulation, one can instead let the network predict the score term in order to minimise the following score matching loss:

\begin{align}
\mathcal{L} = \mathbb{E}_{t,\mathbf{x}_0, \mathbf{x}_t} || s_{\theta}(\mathbf{x}_t, t) - \nabla_{\mathbf{x}_t} \ln \fwd{p}_{t|0}(\mathbf{x}_t | \mathbf{x}_0) ||^2 / \sigma_t^2
\end{align}

% \begin{align}
    
% \end{align}

\subsection{Doob's $h$-transform intuition}\label{app:intuit}

As mentioned before Doob's $h$-transform adds a new drift to the SDE which amounts to two terms (via Bayes Theorem), a conditional and an unconditional score:
\begin{equation}
\nabla \ln  \bwd{P}_{0|t}(\mX_0 \in B | \cdot) = \nabla \ln  \fwd{P}_{t|0}(\cdot| \mX_0 \in B) -  \nabla \ln  {P}_{t}(\cdot)
\end{equation}

Interestingly, these two terms provide for a unique intuition: the Doob's transform SDE is the time reversal of the forward SDE corresponding to \eqref{eq:back_sde}, that is the time reversal of the forward SDE
\begin{align}
\dd \bX_t &= \fwd{b}_t(\bX_t) \,\dd t + \sigma_t \fwd{ \dd \rv{W}}_t, \quad \bX_0 \sim \fwd{P}_{0}(\cdot| \mX_0 \in B)
\label{eq:for_sde},
\end{align}
coincides with the Doob transformed SDE \eqref{eq:back_sde_h} \citep{de2021simulating}. 

Thus we can view Doob's transform as the following series of steps:
\begin{enumerate}
    \item Time reverse the SDE we want to condition (\eqref{eq:back_sde_h} to \eqref{eq:for_sde}).
    \item  Impose the condition via ancestral sampling from the conditioned distribution/posterior.
    \item Time reverse once more to be in the same time direction as we started.
\end{enumerate}

\subsection{Examples}

\paragraph{Truncated normal}
Here for illustrative purposes we frame the problem of sampling from a truncated normal distribution as simulating an SDE that is given by Doob’s h-transform.

Let's remind that a 1d truncated normal distribution had a density $p(x| a, b) \propto \mathds{1}_{x \in (a, b)}(x) \mathcal{N}(x|\mu, \sigma^2)$.
Now, let's assume a data distribution $p_0(x) = \mathrm{N}(\mu, \sigma^2)$ which is noised with an OU process \eqref{eq:OUprocess}.
Thus we have that $p(x_0|x_t) = \mathrm{N}(x_0|\hat{\mu}_{0|t}(x_t), \hat{\sigma}_{0|t}(x_t)^2)$ is Gaussian, and so is $p(x_t) = \mathrm{N}(x_t|\hat{\mu}_t, \hat{\sigma}_t^2)$.
Let's add the constraint that the process hit at time $t=0$ the event $\bX_0 \in (a, b)$.
\begin{align}
% \label{eq:rev_sde_htrans}
    \dd \bH_t &= \beta(t) \left( \frac{\bH_t}{2} + \nabla_{\mH_t} \ln \fwd{P}_{t}(\bH_t)  - {\nabla_{\mH_t}\ln \bwd{P}_{0|t}(\mX_0 \in (a, b) \mid \mH_t) } \right)\,\dd t + \sqrt{\beta(t)} ~\bwd{ \dd \rv{W}}_t, % \quad \bX_T \sim  \dist_T
\end{align}
We have that the h-transform is given by
\begin{align}
    h(t, \bH_t) 
    &= \bwd{P}_{0|t}(\mX_0 \in (a, b) | \bH_t) 
    = \int \mathds{1}_{x \in (a, b)}(\mx_0) \bwd{p}_{0| t}(\mx_0|\bH_t) \mathrm{d} \mx_0 \nonumber \\
    &= \int \mathds{1}_{x \in (a, b)}(\mx_0) \mathcal{N}(x|\hat{\mu}_{0|t}(\bH_t), \hat{\sigma}_{0|t}(\bH_t)^2) \mathrm{d} \mx_0  \nonumber \\
    % = \frac{1}{2} \mathrm{erf}(\frac{a - \hat{x}(x_t)}{2\hat{\sigma}(x_t)})
    &= \frac{1}{\hat{\sigma}_{0|t}(\bH_t)} \frac{\phi\left( \frac{\bH_t - \hat{\mu}_{0|t}(\bH_t)}{\hat{\sigma}_{0|t}(\bH_t)} \right)}{\Phi\left( \frac{b - \hat{\mu}_{0|t}(\bH_t)}{\hat{\sigma}_{0|t}(\bH_t)} \right) - \Phi\left( \frac{a - \hat{\mu}_{0|t}(\bH_t)}{\hat{\sigma}_{0|t}(\bH_t)} \right)}
\end{align}
where $\phi(\xi) = \frac{1}{\sqrt{2 \pi}} \exp \left( - \frac{1}{2} \xi^2 \right)$ is the pdf of a standard normal distribution, $\Phi(\xi) = \frac{1}{2}\left( 1 + \erf(\xi/\sqrt{2})\right)$ its cumulative function.
The corrective drift term due to the h-transform can then be computed via autograd.
The unconditional score term can be computed in closed form.

\section{Algorithms\label{app:algorithms}}
In this section, we reformulate multiple algorithms from the literature under our common framework as a reference for practitioners. In these algorithms, we use the following conventions: our dataset is drawn from the law $\distdata$, but we can only sample from the simpler law $\distsampling$ at inference time, which is often chosen as multivariate standard normal $\distsampling = \mathcal{N}(0,\mathbf{I})$. Therefore, we construct a forward noising process $\distdata \to \distsampling$ that is parametrised via the noise schedule $\beta_t = \beta(t), \bar{\alpha}_t = \bar{\alpha}(t)$ and try to learn the reverse denoising process $\distsampling \to \distdata$. Due to this notion of "forward", and to keep consistency with the literature on denoising diffusion models, we explicate the nomenclature $\distdata = \distO$ and $\distsampling = \distT$.

There is an additional law $\distnoise$ that is sometimes confused with $\distsampling$ since in practice both are often chosen as $\mathcal{N}(0,\mathbf{I})$, but they are two distinct laws that could in principle be different. $\distnoise$ is the law from which the noise added during the forward noising process as well as the during the reverse diffusion process is drawn from. 
% For reference, we first reiterate the unconditional DDPM training and sampling algorithms, followed by the various conditional methods. For each method, we highlight the differences to standard DDPM sampling or training in gray boxes for clarity.
\subsection{Unconditional algorithms}
\begin{algorithm}[H]
\caption{$\vert$ Unconditional training of denoising diffusion models \citep{ho2020denoising}}
\label{algo:uncond_training}
\begin{algorithmic}[1]
\Require Dataset drawn from law $\distdata = \distO $ \Comment{Dataset law $\distdata$}
\Require Noise schedule $\beta_t = \beta(t), \bar{\alpha}_t = \bar{\alpha}(t)$, parametrising process $\distdata \to \distsampling$
\Require Untrained noise predictor function $\nn(\mathbf{x}, t)$ with parameters $\theta$
\Repeat
    \State $\mathbf{x}_0 \sim \distO = \distdata$
    \State $t \sim \text{Uniform}(\{1,...,T\})$
    \Statex
    \LComment{Forward noise sample, $\mathbf{x}_t \sim \fwd{p}_{t \vert 0}(\mathbf{x}_0)$}
    \State $\noise_t \sim \mathcal{P}_\text{noise}$
    \Comment{Often Brownian motion, $\distnoise = \mathcal{N}(0,\mathbf{I})$}
    \State $\mathbf{x}_t \gets \sqrt{\bar\alpha_t} \mathbf{x}_0 + \sqrt{1 - \bar\alpha_t} \noise_t$ 

    \Statex
    \LComment{Estimate noise of noised sample}
    \State $\l{\e{\noise}} \gets \nn(\mathbf{x}_t, t)$ 

    \Statex
    \State \text{Take gradient descent step on} 
    \Statex $\quad \nabla_{\theta}L(\noise_t, \e{\noise}_\theta)$
    \Comment{Typically, loss $L(x_\text{true}, x_\text{pred}) = || x_\text{true} - x_\text{pred} ||^2$}
\Until converged or max epoch reached

\end{algorithmic}
\end{algorithm}
\vspace{-0.5cm}
\begin{algorithm}[H]
\caption{$\vert$ Unconditional sampling with denoising diffusion models \citep{ho2020denoising} }
\label{algo:uncond_sampling}
\begin{algorithmic}[1]
    \Require Unconditionally trained noise predictor $\nn(\mathbf{x}_t, t)$ 
    \Require Noise schedule $\beta_t = \beta(t), \bar{\alpha}_t = \bar{\alpha}(t)$, parametrising process $\distdata \to \distsampling$
    \LComment{Sample a starting point $\mathbf{x}_T$}
    \State $\mathbf{x}_T \sim \distT = \distsampling$
    \Comment{Often $\mathcal{P}_T = \mathcal{N}(0,\mathbf{I})$}
    
    \Statex
    \LComment{Iteratively denoise for $T$ steps}
    \For{$t$ in $(T, T-1, \dots, 1)$}
        % Predict noise
        \LComment{Predict noise with learned network}
        \State $\l{\e{\noise}} = \nn(\mathbf{x}_t, t)$

        % Peform reverse drift step
        \Statex
        \LComment{Denoise sample with learned reverse process $\mathbf{x}_{t-1} \sim \bwd{p}_{t-1|t}(\mathbf{x}_t)$}
        \LComment{Perform reverse drift}
        \State $\mathbf{x}_{t-1} \gets \dfrac{1}{\sqrt{1-\beta_t}} \left(
        \mathbf{x}_t - \dfrac{\beta_t}{\sqrt{1 - \bar\alpha_t}} \l{\e{\noise}} \right)$

        % Perform reverse diffusion step
        \Statex
        \Statex
        \LComment{Perform reverse diffusion, which is often Brownian motion in $\mathbb{R}^n$, i.e. $\distnoise = \mathcal{N}(0,\mathbf{I})$}
        \State $\noise_t \sim \distnoise$ if $t>1$ else $\noise_t \gets 0$
        \State $\mathbf{x}_{t-1} \gets \mathbf{x}_{t-1} + \sigma_t \noise_t$ 
        \Comment{A common choice is $\sigma_t = \beta(t)$}
    \EndFor
    \State \Return $\mathbf{x}_0$
\end{algorithmic}
\end{algorithm}
\subsection{Conditional training\label{app:cond_training_algos}}\vspace{-0.5cm}
\begin{algorithm}[H]
\caption{$\vert$ Classifier-free conditional training \citep{ho2022classifier}}
\label{algo:classifier_free_conditional_training}
\begin{algorithmic}[1]
\Require Dataset drawn from $\distdata$ \Comment{Dataset law $\distdata$ over data and auxiliary variable}
\Require Noise schedule $\beta_t = \beta(t), \bar{\alpha}_t = \bar{\alpha}(t)$, parametrising process $\distdata \to \distsampling$
\Require Untrained noise predictor function $\nn(\mathbf{x}, t)$ with parameters $\theta$
\Repeat
    \State $\mathbf{x}_0, \my \sim \distO = \distdata$
    \State $\noise_t \sim \mathcal{P}_\text{noise}$
    \Comment{Often Brownian motion, $\distnoise = \mathcal{N}(0,\mathbf{I})$}
    \State $t \sim \text{Uniform}(\{1,...,T\})$
    \State $\mathbf{x}_t = \sqrt{\bar\alpha_t \mathbf{x}_0} + \sqrt{1 - \bar\alpha_t} \noise_t$
    \State $\l{\e{\noise}} = \nn(\mathbf{x}_t, t, \my)$
    \State \text{Take gradient descent step on} 
    \Statex $\quad \nabla_{\theta}L(\noise_t, \e{\noise}_\theta)$
    \Comment{Typically, $L(x_\text{true}, x_\text{pred}) = || x_\text{true} - x_\text{pred} ||^2$}
\Until converged or max epoch reached
\end{algorithmic}
\end{algorithm}\vspace{-0.7cm}
\begin{algorithm}[H]
\caption{$\vert$ RFDiffusion conditional training \citep{watson2023novo} }
\label{algo:rfdiff}
\begin{algorithmic}[1]
\Require Dataset drawn from $\distdata$ \Comment{Dataset law $\distdata$}
\Require Noise schedule $\beta_t = \beta(t), \bar{\alpha}_t = \bar{\alpha}(t)$, parametrising process $\distdata \to \distsampling$
\Require Untrained noise predictor function $\nn(\mathbf{x}, t$\hl{$, M)$} with parameters $\theta$
\Repeat
    \State $\mathbf{x}_0 \sim \distO = \distdata$
    \State $t \sim \text{Uniform}(\{1,...,T\})$

    % Randomly sample motif
    \BeginBox[fill=shadecolor]
    % \Statex
    \State $\mathbf{x}_0^{[M]} \cup \mathbf{x}_0^{[\backslash M]} \gets \mathbf{x}_0$  \Comment{Randomly partition data point into motif and rest}

    % Noise rest
    \Statex
    \LComment{Forward noise the non-motif rest via sampling from $\fwd{p}_{0 \vert t}(\mathbf{x}_0)$}
    \State $\noise_t \sim \mathcal{P}_\text{noise}$
    \State $\mathbf{x}_t^{[\backslash M]} \gets \sqrt{\bar\alpha_t} \mathbf{x}_0^{[\backslash M]} + \sqrt{1 - \bar\alpha_t} \noise_t^{[\backslash M]}$ 

    % Combine unnoised motif with noised rest
    \Statex
    \LComment{Combine unnoised motif with noised rest and set timestep of motif part to 0}
    \State $\mathbf{x}_t \gets \mathbf{x}_0^{[M]} \cup \mathbf{x}_t^{[\backslash M]}$
    \State $t^{[M]} \gets 0$
    \EndBox
    \State $\l{\e{\noise}} \gets \nn(\mathbf{x_t}, t$\hl{$, M)$} \Comment{Estimate noise of sample with noised rest}
    \State \text{Take gradient descent step on} 
    \Statex $\quad \nabla_{\theta}L(\noise, \e{\noise}_\theta)$
    \Comment{Typically, $L(x_\text{true}, x_\text{pred}) = || x_\text{true} - x_\text{pred} ||^2$}
\Until converged or max epoch reached

\end{algorithmic}
\end{algorithm}\vspace{-0.6cm}
\begin{algorithm}[H]
\caption{$\vert$ Amortised training -- i.e. Doob's $h$-transform conditional training \new }
\label{algo:cond_dobsh}
\begin{algorithmic}[1]
\Require Dataset drawn from $\distdata$ \Comment{Dataset law $\distdata$}
\Require Noise schedule $\beta_t = \beta(t), \bar{\alpha}_t = \bar{\alpha}(t)$, parametrising process $\distdata \to \distsampling$
\Require Untrained noise predictor function $\nn(\mathbf{x}, t$\hl{$,\mathbf{x}^{[M]}, M)$} with parameters $\theta$
\Repeat
    \State $\mathbf{x}_0 \sim \distO = \distdata$
    \State $t \sim \text{Uniform}(\{1,...,T\})$

    %Partition into motif
    \BeginBox[fill=shadecolor]
    \State $\mathbf{x}_0^{[M]} \cup \mathbf{x}_0^{[\backslash M]} \gets \mathbf{x}_0$  \Comment{Randomly partition data point into motif and rest}
    \EndBox
    
    % Forward noise
    \LComment{Forward noise full sample via sampling from $\fwd{p}_{0 \vert t}(\mathbf{x}_0)$}
    \State $\noise_t \sim \mathcal{P}_\text{noise}$
    \State $\mathbf{x}_t \gets \sqrt{\bar\alpha_t} \mathbf{x}_0 + \sqrt{1 - \bar\alpha_t} \noise_t$ 

    % Estimate noise
    \Statex
    \LComment{Estimate noise of sample with original motif as additional input}
    \State $\l{\e{\noise}} \gets \nn(\mathbf{x}_t, t$\hl{$,\mathbf{x}_0^{[M]}, M)$}
    \State \text{Take gradient descent step on} 
    \Statex $\quad \nabla_{\theta}L(\noise, \e{\noise}_\theta)$
    \Comment{Typically, $L(x_\text{true}, x_\text{pred}) = || x_\text{true} - x_\text{pred} ||^2$}
\Until converged or max epoch reached

\end{algorithmic}
\end{algorithm}

%\subsection{Conditional Finetuning\label{app:cond_finetuning_algos}}

\subsection{Conditional sampling\label{app:cond_sampling_algos}}
\begin{algorithm}[H]
\caption{$\vert$ RFDiffusion conditional sampling \citep{watson2023novo} }
\label{algo:rf_diffusion}
\begin{algorithmic}[1]
    \Require \hl{Conditionally trained} noise predictor $\nn(\mathbf{x}, t$\hl{$, M)$}
    \Require Target motif/context $\mathbf{x}_0^{[M]}$
    \Require Noise schedule $\beta_t = \beta(t), \bar{\alpha}_t = \bar{\alpha}(t)$, parametrising process $\distdata \to \distsampling$
    \LComment{Sample a starting point $\mathbf{x}_T$}
    \State $\mathbf{x}_T \sim \distT = \distsampling$
    
    \Statex
    \LComment{Iteratively denoise for $T$ steps}
    \Comment{Often $\mathcal{P}_T = \mathcal{N}(0,\mathbf{I})$}
    \For{$t$ in $(T, T-1, \dots, 1)$}
        \BeginBox[fill=shadecolor]
        % Overwrite  motif
        \LComment{Overwrite motif variables with target motif and reset their time parameter}
        \LComment{Note: Original RFDiffusion zero-centers $\mathbf{x}_t$ and $\mathbf{x}_0^{[M]}$ individually for equivariance.}
        \State $\mathbf{x}_{t}^{[M]} \gets \mathbf{x}_0^{[M]}$ \Comment{Set noisy motif to unnoised motif}
        \State $t^{[M]} \gets 0$ \Comment{Set timesteps for motif to 0}
        \EndBox
        % Predict noise
        \State $\l{\e{\noise}} = \nn(\mathbf{x}_t, t$\hl{$, M)$} \Comment{Predict noise with learned network}

        % Peform reverse drift step
        \Statex
        \LComment{Denoise sample with learned reverse process $\mathbf{x}_{t-1} \sim \bwd{p}_{t-1|t}(\mathbf{x}_t)$}
        \LComment{Perform reverse drift}
        \State $\mathbf{x}_{t-1} \gets \dfrac{1}{\sqrt{1-\beta_t}} \left(
        \mathbf{x}_t - \dfrac{\beta_t}{\sqrt{1 - \bar\alpha_t}} \l{\e{\noise}} \right)$
        % Perform reverse diffusion step
        \Statex
        \Statex
        \LComment{Perform reverse diffusion, which is often Brownian motion in $\mathbb{R}^n$, i.e. $\distnoise = \mathcal{N}(0,\mathbf{I})$}
        \State $\noise_t \sim \distnoise$ if $t>1$ else $\noise_t \gets 0$
        \State $\mathbf{x}_{t-1} \gets \mathbf{x}_{t-1} + \sigma_t \noise_t$ 
        \Comment{A common choice is $\sigma_t = \beta(t)$}
    \EndFor
    \State \Return $\mathbf{x}_0$
\end{algorithmic}
\end{algorithm}\vspace{-0.6cm}
\begin{algorithm}[H]
\caption{$\vert$ Replacement conditional sampling }
\label{algo:replace}
\begin{algorithmic}[1]
    \Require Unconditionally trained noise predictor $\nn(\mathbf{x}_t, t)$ 
    \Require Noise schedule $\beta_t = \beta(t), \bar{\alpha}_t = \bar{\alpha}(t)$, parametrising process $\distdata \to \distsampling$
    \Require Target motif $\mathbf{x}_0^{[M]}$
    \LComment{Sample a starting point $\mathbf{x}_T$}
    \State $\mathbf{x}_T \sim \distT = \distsampling$
    
    \Statex
    \LComment{Iteratively denoise for $T$ steps}
    \Comment{Often $\mathcal{P}_T = \mathcal{N}(0,\mathbf{I})$}
    \For{$t$ in $(T, T-1, \dots, 1)$}         
        %\Statex
        \LComment{Predict noise with learned network}
        \State $\l{\e{\noise}} \gets \nn(\mathbf{x}_t, t)$

        \Statex
        \LComment{Denoise sample with learned reverse process $\mathbf{x}_{t-1} \sim \bwd{p}_{t-1|t}(\mathbf{x}_t)$}
        \LComment{Perform reverse drift}
        \State $\mathbf{x}_{t-1} \gets \dfrac{1}{\sqrt{1-\beta_t}} \left(
        \mathbf{x}_t - \dfrac{\beta_t}{\sqrt{1 - \bar\alpha_t}} \l{\e{\noise}} \right)$

        \Statex
        \Statex
        \LComment{Perform reverse diffusion, which is often Brownian motion in $\mathbb{R}^n$, i.e. $\distnoise = \mathcal{N}(0,\mathbf{I})$}
        \State $\noise_t \sim \distnoise$ if $t>1$ else $\noise_t \gets 0$
        \State $\mathbf{x}_{t-1} \gets \mathbf{x}_{t-1} + \sigma_t \noise_t$ 
        \Comment{A common choice is $\sigma_t = \beta(t)$}

        \BeginBox[fill=shadecolor]
        % Forward noise motif
        \Statex
        \LComment{Forward noise the target motif $\mathbf{x}_{t-1}^{[M]} \sim \fwd{p}_{0 \vert {t-1}}(\mathbf{x}_0^{[M]})$}
        \State $\bm\eta_{t-1} \sim \distnoise$ if $t > 1$ else $\bm\eta_{t-1} \gets 0$
        \State $\mathbf{x}_{t-1}^{[M]} \gets \sqrt{\bar\alpha_{t-1}} \mathbf{x}_0^{[M]} + \sqrt{1 - \bar\alpha_{t-1}} \bm\eta_{t-1}$ 

        \State $\mathbf{x}_{t-1} \gets \mathbf{x}_{t-1}^{[\backslash M]} \cup \mathbf{x}_{t-1}^{[M]} $  \Comment{Insert noised motif into current sample}
        \EndBox
    \EndFor
    \State \Return $\mathbf{x}_0$
\end{algorithmic}
\end{algorithm}

\begin{algorithm}[H]
\caption{$\vert$ Reconstruction Guidance (i.e. Moment Matching (MM) Approximation to $h$-transform)}
\label{algo:uncond_Doob'sh}
\begin{algorithmic}[1]
    \Require Unconditionally trained noise predictor $\nn(\mathbf{x}_t, t)$ ,  target motif/context $\mathbf{x}_0^{[M]}$.
    \Require Noise schedule $\beta_t = \beta(t), \bar{\alpha}_t = \bar{\alpha}(t)$, parameterising process $\distdata \to \distsampling$
    \BeginBox[fill=shadecolor]
    \Require Guidance scale (schedule) $\gamma_t = \gamma(t)$
    \Require Conditioning loss $l(x_\text{true}, x_\text{pred})$. e.g,  Gaussian MM $l(x_\text{true}, x_\text{pred}) = ||x_\text{true} - x_\text{pred}||^2$
    \EndBox
    \LComment{Sample a starting point $\mathbf{x}_T$}
    \State $\mathbf{x}_T \sim \distT = \distsampling$
    \Comment{Often $\mathcal{P}_T = \mathcal{N}(0,\mathbf{I})$}
    
    \Statex
    \LComment{Iteratively denoise and condition for $T$ steps}
    \For{$t$ in $(T, T-1, \dots, 1)$}
        % Predict noise
        \State $\l{\e{\noise}} = \nn(\mathbf{x}_t, t)$ \Comment{Predict noise with learned network}

        \BeginBox[fill=shadecolor]
        % Estimate denoised sample via Tweedie's formula
        \Statex
        \LComment{Estimate current denoised estimate via Tweedie's formula}
        % c.f. eq 15 in Ho et al. 2020
        \State $\e{\mathbf{x}}_0(\mathbf{x}_t, \l{\e{\noise}}) \gets \frac{1}{\sqrt{\bar{\alpha}_t}}(\mathbf{x}_t - \sqrt{1-\bar{\alpha}_t} \l{\e{\noise}}) $
        \Comment{c.f. also eq.~15 in \cite{ho2020denoising}}
        % Perform gradient descent step towards condition
        \Statex
        \LComment{Perform gradient descent step towards condition on motif dimensions $M$}
        \State $\mathbf{x}_t \gets \mathbf{x}_t - \gamma_t \nabla_x l(\mathbf{x}_0^{[M]}, \e{\mathbf{x}}_0^{[M]}(\mathbf{x}_t, \l{\e{\noise}}))$
        \Comment{Requires backprop through \nn via e.g. $L_2$ loss}
        \EndBox

        % Peform reverse drift step
        \Statex
        \LComment{Denoise sample with learned reverse process $\mathbf{x}_{t-1} \sim \bwd{p}_{t-1|t}(\mathbf{x}_t)$}
        \State $\mathbf{x}_{t-1} \gets (1-\beta_t)^{-1/2} \left(
        \mathbf{x}_t - {\beta_t}{({1 - \bar\alpha_t})^{-1/2}} \l{\e{\noise}} \right)$ \Comment{Perform reverse drift}
        % Perform reverse diffusion step

        \LComment{Perform reverse diffusion, which is often Brownian motion in $\mathbb{R}^n$, i.e. $\distnoise = \mathcal{N}(0,\mathbf{I})$}
        \State $\noise_t \sim \distnoise$ if $t>1$ else $\noise_t \gets 0$
        \State $\mathbf{x}_{t-1} \gets \mathbf{x}_{t-1} + \sigma_t \noise_t$ 
        \Comment{A common choice is $\sigma_t = \beta(t)$}
    \EndFor
    \State \Return $\mathbf{x}_0$

\end{algorithmic}
\end{algorithm}\vspace{-0.6cm}
\begin{algorithm}[H]
\caption{$\vert$ Replacement conditional Sampling \citep{lugmayr2022repaint} }
\label{algo:repaint}
\begin{algorithmic}[1]
    \Require Unconditionally trained noise predictor $\nn(\mathbf{x}_t, t)$ 
    \Require Noise schedule $\beta_t = \beta(t), \bar{\alpha}_t = \bar{\alpha}(t)$, parametrising process $\distdata \to \distsampling$
    \Require Target motif $\mathbf{x}_0^{[M]}$
    \LComment{Sample a starting point $\mathbf{x}_T$}
    \State $\mathbf{x}_T \sim \distT = \distsampling$
    
    \Statex
    \LComment{Iteratively denoise for $T$ steps}
    \Comment{Often $\mathcal{P}_T = \mathcal{N}(0,\mathbf{I})$}
    \For{$t$ in $(T, T-1, \dots, 1)$}
    \Comment{$T$ time steps}
        \BeginBox[fill=shadecolor]
        \For{$r$ in $1, \dots, R$}
        \Comment{$R$ repaint steps}
        \EndBox
            % Predict noise
            %\Statex
            \LComment{Predict noise with learned network}
            \State $\l{\e{\noise}} \gets \nn(\mathbf{x}_t, t)$

            % Denoise 
            \Statex
            \LComment{Denoise sample with learned reverse process $\mathbf{x}_{t-1} \sim \bwd{p}_{t-1|t}(\mathbf{x}_t)$}
            \State $\mathbf{x}_{t-1} \gets (1-\beta_t)^{-1/2} \left(
        \mathbf{x}_t - {\beta_t}{({1 - \bar\alpha_t})^{-1/2}} \l{\e{\noise}} \right)$  \Comment{Perform reverse drift}
    
            \LComment{Perform reverse diffusion, often Brownian motion in $\mathbb{R}^n$, i.e. $\distnoise = \mathcal{N}(0,\mathbf{I})$}
            \State $\noise_t \sim \distnoise$ if $t>1$ else $\noise_t \gets 0$
            \State $\mathbf{x}_{t-1} \gets \mathbf{x}_{t-1} + \sigma_t \noise_t$ 
            \Comment{A common choice is $\sigma_t = \beta(t)$}

            \BeginBox[fill=shadecolor]
            % Forward noise motif
            \Statex
            \LComment{Forward noise the target motif $\mathbf{x}_{t-1}^{[M]} \sim \fwd{p}_{0 \vert {t-1}}(\mathbf{x}_0^{[M]})$}
            \State $\bm\eta_{t-1} \sim \distnoise$ if $t > 1$ else $\bm\eta_{t-1} \gets 0$
            \State $\mathbf{x}_{t-1}^{[M]} \gets \sqrt{\bar\alpha_{t-1}} \mathbf{x}_0^{[M]} + \sqrt{1 - \bar\alpha_{t-1}} \bm\eta_{t-1}$ 

            % Insert noised motif            
            \State $\mathbf{x}_{t-1} \gets \mathbf{x}_{t-1}^{[\backslash M]} \cup \mathbf{x}_{t-1}^{[M]} $  \Comment{Insert noised motif into current sample}

            % Forward noise sample from t-1 to t
            \If{$r < R$ and $t > 1$} \Comment{Forward noise sample from $t-1$ to $t$, $\mathbf{x}_t \sim \fwd{p}_{t|t-1}(\mathbf{x}_{t-1})$}
                \State $\bm\zeta_{t-1} \sim \distnoise$
                \State $\mathbf{x}_t \gets \sqrt{1-\beta_{t-1}} \mathbf{x}_{t-1} + \sqrt{\beta_{t-1}} \bm\zeta_{t-1}$ 
            \EndIf
        \EndFor
        \EndBox
    \EndFor
    \State \Return $\mathbf{x}_0$

\end{algorithmic}
\end{algorithm}

% Insert new Algo here
\iffine
\begin{algorithm}[h!]
\caption{H-transform finetuning (offline)}
\begin{algorithmic}[1]
\Require Dataset drawn from $\mathcal{P}_\text{data}$ 
\Require Noise schedule $\beta_t = \beta(t), \Bar{\alpha}_t = \Bar{\alpha}(t)$, parametrising process $\mathcal{P}_\text{data} \to \mathcal{P}_\text{sampling}$
\Require Trained noise predictor function $f_\theta(\mathbf{x}_t, t)$ with frozen parameters $\theta$
\Require Untrained h-transform network $h_\phi(\mathbf{x}_t,t, \mathbf{x}^{[M]}, M)$ with parameters $\phi$
\Repeat
\State $\mathbf{x}_0 \sim \mathcal{P}_0 = \mathcal{P}_\text{data}$
\State $t \sim \text{Uniform}(\{1, \dots, T\})$
\State $\mathbf{x}_0^{[M]} \cup \mathbf{x}_0^{[\backslash M]} \gets \mathbf{x}_0$ \Comment{Randomly partition data point into motif and rest}
\LComment{Forward noise full sample via sampling from $p_{0|t}(\mathbf{x}_0)$}
\State $\epsilon_t \sim \mathcal{P}_\text{noise}$
\State $\mathbf{x}_t \gets \sqrt{\Bar{\alpha}_t} \mathbf{x}_0 + \sqrt{1 - \Bar{\alpha}_t} \epsilon_t$
\LComment{estimate noise of sample with trained noise function $f_\theta$ (in \texttt{torch.no\_grad()} mode)}
\State $\hat{\epsilon}_\theta \gets f_\theta(\mathbf{x}_t, t)$
\BeginBox[fill=shadecolor]
\LComment{estimate noise of sample with h-transform network}
\State $\hat{\epsilon}_\phi \gets h_\phi(\mathbf{x}_t,t, \mathbf{x}^{[M]}, M)$
\EndBox
\State Take gradient descent step on \\ \quad \quad \quad $\nabla_\phi L(\epsilon, \hat{\epsilon}_\theta, \hat{\epsilon}_\phi)$ \Comment{Typically $L(\epsilon, \hat{\epsilon}_\theta, \hat{\epsilon}_\phi) = \| (\hat{\epsilon}_\theta + \hat{\epsilon}_\phi) - \epsilon \|^2$}
\Until{converged or max epoch reached}
\end{algorithmic}
\end{algorithm}
\fi

\section{Amortised learning of Doob's transform}

\subsection{Proof of Proposition \ref{prop:train}}

\begin{proof}
% For the purpose of this sketch, we consider noisy measurement models of the form: 
% \begin{align}
%     \mY = \Pm (\fX_0) + \eta
% \end{align}
% Where $\eta$ is independently sampled noise.
Via the mean squared error property of the conditional expectation the minimiser is given by:
\begin{align}
    f^*_t(\vh, \m, \Pm) &= \E \left[ \nabla_{\fX_t}\ln \fwd{p}_{t|0}(\fX_t|\fX_0) | \mY = \m, \fX_t =\vh , \Pm = \A \right]
\end{align}
Then:
\begin{align}
    f&^*_t(\vh, \m, \Pm) = \int \nabla_{\vh}\ln \fwd{p}_{t|0}(\vh|\fX_0 ) \bwd{p}_{0|t}( \fX_0 | \fX_t =\vh, \mY=\m, , \Pm = \A ) \d\fX_0 \nonumber \\
    &=  \int \frac{\nabla_{\vh} \fwd{p}_{t|0}(\vh| \fX_0 )}{ \fwd{p}_{t|0}(\vh| \fX_0 )}  \frac{\bwd{p}_{t|0}( \fX_t=\vh | \fX_0, \mY=\m) p(\fX_0| \mY=\m,, \Pm = \A )}{p(\fX_t =\vh | \mY=\m, , \Pm = \A )} \d\fX_0\nonumber \\
    &= \frac{1}{p(\fX_t =\vh | \mY=\m, , \Pm = \A )}\int \frac{\nabla_{\vh} \fwd{p}_{t|0}(\vh| \fX_0 )} {\fwd{p}_{t|0}(\vh| \fX_0 )} {\bwd{p}_{t|0}( \fX_t =\vh | \fX_0) p(\fX_0| \mY=\m, , \Pm = \A )} \d \fX_0 \nonumber \\
    &= \frac{1}{p(\fX_t =\vh | \mY=\m, , \Pm = \A )}\nabla_{\vh} \int {\fwd{p}_{t|0}(\vh| \fX_0 )}  { p(\fX_0| \mY=\m, , \Pm = \A )} \d \fX_0 \  \nonumber \\
    &= \frac{1}{\fwd{p}( \fX_t =\vh | \mY = \m, , \Pm = \A )}\nabla_{\vh}\fwd{p}( \fX_t =\vh | \Pm \fX_0 = \m) \\
    &= \nabla_{\vh}\ln\fwd{p}( \fX_t =\vh | \mY = \m, , \Pm = \A ) \nonumber,
\end{align}
% where we have used $\oplus$ loosely as a concatenation operator. 
\end{proof}

\section{Related Work Discussion}

\paragraph{Classifier free guidance}{

As we highlighted before, methodologies such as classifier free guidance \citep{ho2022classifier} do not model the measurement operator explicitly. As a result, if these methods are applied to settings such as motif-scaffolding or image out-painting (where the conditioning is on a subset of the random variable), these methodologies would only denoise the scaffolding and the missing image patches. This is different to our approach which adds noise to both motif and scaffolding and then proceeds to denoise both jointly as part of the same space. In a way, one can view RFDiffusion's conditional training as an application of classifier-free guidance to this subset conditioning setting.
}

\paragraph{Image 2 Image Schr\"{o}dinger Bridges (I2SB \cite{liu20232})}{ I2SB  and more generally aligned Schr\"{o}dinger Bridges \citep{somnath2023aligned} are a recently proposed class of conditional generative models based on ideas from Schr\"{o}dinger bridges.

The premise of these methods is that they aim to learn an interpolating diffusion between a clean data sample and a corrupted / altered-corrupted data sample. This is in contrast to our framework/approaches: we consider an unconditioned SDE and condition it to hit an event at a particular time, thus learning an interpolating distribution between noise and an un-corrupted target distribution. This results in several algorithmic differences:

\begin{itemize}
    \item At its core, I2SB treat $\mY = \Pm(\mX_0) +\eta$ and $\mX_0$ as source and target distributions respectively; thus, at inference/sampling time, $\mY$ is provided to the learned SDE which generates approximate samples from $\law {\mX_0}$. However, in our approach, the source distribution is $\gN(0,I)$ and we pass $\mY$ to the score network to then obtain approximate samples from $\law {\mX_0}$.
    
    \item The score network in I2SB is a function only of $\mX_t$ and not $\mY=\Pm(\mX_0) +\eta$. This means that in I2SB, the network is parametrised as $\epsilon_{\theta}(t,\mX_t)$, whilst in our setting we parametrise as $\epsilon_{\theta}(t,\mX_t, \Pm(\mX_0) +\eta, \Pm)$. In the case of completion tasks like motif-scaffolding or image out-painting, our paramerisation looks something like $\epsilon_{\theta}(t,\mX_t, \mX_0^{\mathrm{mask}}, \mathrm{mask})$. This makes the task much easier for the network as we effectively provide it with a binary variable indicating which parts of the image are conditioned and which are not. In I2SB,  the network must learn this on its own. Furthermore, as we show in \cref{prop:train}, adding this to the network parametrisation is essential to allow recovering the true conditional score.
    
    \item The training procedure in I2SB uses the diffusion bridge $p(\mX_t| \mX_0, \Pm(\mX_0) +\eta)$ to add noise to both the source and target distributions, whilst our forward process is given by the transition density of an OU process $p(\mX_t| \mX_0)$ and is identical to standard DDPM/VP-SDE \citep{song2021Scorebased,ho2020denoising} noise adding procedures.
\end{itemize}

To summarise: whilst both methodologies employ similar mathematical methodologies (e.g. Diffusion Bridges \citep{de2021simulating}), their ideations and resulting methods are fundamentally different: on one side, \cite{liu20232} learns an interpolating distribution between the unconditioned $p(\mX_0)$ and conditioned $p(\mY| \mX_0)$ samples. On the other, we learn a denoising procedure that directly samples from the posterior  $p( \mX_0 | \mY)$; via this, we derive and explain most popular approaches for conditioning denoising diffusion models as part of our framework.
}

\paragraph{CDE}{ CDE \citep{batzolis2021conditional} is the adaptation of  \citep{ho2022classifier,saharia2022image} to inverse problem-like settings, deriving a variation of classifier-free guidance to a measurement model styled scenario. Whilst they do not focus on the measurement model, they estimate a very similar quantity as our Proposition 2.5  
\begin{align}
   f^{\mathrm{CDE}} (\vh, \m)=  \nabla_{\vh} \log \fwd{p}_{t|0}(\vh \vert \mY = \m )
\end{align}
In contrast to to the amortised conditional training:
\begin{align}
   f^{\mathrm{Doobs}} (\vh, \m, \A)=  \nabla_{\vh} \log \fwd{p}_{t|0}(\vh \vert \mY = \m , \Pm=\A)
\end{align}
when explicitly considering the distribution over the measurement model, one can see that the quantities are related to one another via marginalizing the measurement model $p_{\Pm}$. This introduces several practical and conceptual differences:
\begin{itemize}
    \item If we consider in/out painting as an example, the score network estimating $f^{\mathrm{CDE}}$ is not explicitly aware of where in the image the missing pixels are. As a result, it must perform inference over $\Pm$ (effectively marginalizing it) in order to know where to complete the image. This is clearly a much harder task for a single network to learn than conditioning on $\Pm$ where we provide this information.
    
    \item Viewed under the lens of the h-transform, $f^{\mathrm{CDE}}$ can be viewed as amortising the event $\Pm(\fX_0) = \m$ for random $\Pm$. It therefore falls under the soft constraint settings described in Sec.~\ref{sec:soft_guidance} since $\Pm(\fX_0) | \fX_0$ is not a delta. Our quantity $f^{\mathrm{Doobs}}$ is amortising over $\A(\fX_0) = \m$ for deterministic $\A$ and is therefore part of the more classical hard constraint domain of Doobs transform (Sec.~\ref{sec:hard_guidance}). We believe amortising over these simpler deterministic events can offer an advantage in making the problem easier to learn.
\end{itemize}

Finally, notice that unlike our fine-tuned h-transform objective CDE seeks to learn the conditional score model from scratch rather than fine-tune an existing one.
}

\paragraph{First Hitting Diffusions}{
A line of generative modelling methods proposed in \citep{ye2022first,liu2023learning} utilise the h-transform for unconditional generative modelling in the following settings: 

\begin{itemize}
    \item Hitting the target distribution $p_{\mathrm{data}}$ in a finite amount of time $[0,T]$ via time reversing an h-transformed VP-SDE conditioned to hit $0$ at time $T$.
    \item Constraining a diffusion process at time $T$ to lie in a subset of the reals $\Omega \subseteq \sR^d$. 
\end{itemize}

Whilst the aforementioned work uses a similar methodology and theory the focus is more in line with unconditional generative modelling rather than our setting which seeks to sample from the posterior arising in an inverse model.
}

\paragraph{RFDiffusion}{

As highlighted in \cref{algo:rfdiff} and in contrast to our approach, RFDiffusion \citep{watson2023novo} does not noise the motif coordinates $\fX_0^{[M]}$ with the forward OU-Process, instead it directly aims to sample from $p(\fX_t^{[\backslash M]} | \fX_0^{[ M]})$ and estimate this score while keeping the motif fixed. 

We can relate this approach to our amortised learning of Doob's $h$-transform, by noting that RF diffusion can be understood as learning the marginal conditional score:
\begin{align}
    p(\fX_t^{[\backslash M]} | \fX_0^{[ M]})   = \int   \overbrace{p(\fX_t | \fX_0^{[ M]}) }^{\propto h(t,\fX_t)p_t(\fX_t) } d \fX_t^{[ M]}.
\end{align}
This can be viewed as RFDiffusion estimating a marginal counterpart of our amortised $h$-transform approach.
See \cref{algo:cond_dobsh,algo:rfdiff} for more details on how these approaches differ in a pseudo-code implementation.
}

\section{Experimental details: image experiments}
\label{app:experimental_details_images}

In the image experiment, we use the DDPM~\citep{ho2020denoising} formulation for the diffusion model with $N=1000$ steps, a linear $\beta$-schedule with $\beta_0=10^{-4}$ and $\beta_N=2\cdot10^{-2}$.

\paragraph{Data}
We focus on the {\scshape CelebA}~\citep{liu2015faceattributes} and {\scshape Flowers}~\citep{nilsback08} image datasets.
For each of these datasets, we follow the same preprocessing procedure consisting of centrally cropping the image to size $64\times 64$, and rescaling to pixel values $[-1,1]$.
We use this information to also clip our model's prediction.

\paragraph{Noise model}
The noise model $\epsilon_\theta$ consists of a UNET architecture with four downsampling blocks consisting of 2d convolutional layers of dimensionality 128, 256, 384 and 512, respectively. We apply attention in the middle layers of the UNet with four heads.
Throughout the network, we use the SiLU activation function, no dropout and group normalisation layers.
The amortised network differs from the unconditional network in the fact that it accepts as input twice the number of channels (six instead of only three RGB channels).
The unconditional models operate directly on the three RGB channels while the amortised network operates on the RBG channels, the mask and the condition.
We can represent the mask and the condition information, however, into a single input with the same dimension as the image. The values of this input will be equal to the condition when the mask is $1$ and set to a padding value of $-2$ where the mask is $0$.
We concatenate the image $\R^{3\times H \times W}$ with the condition and mask input of size $\R^{3\times H \times W}$ into an image with six channels.
Due to this minor difference, our amortised network has $68.159$M parameters while the unconditional networks have $68.156$M parameters (roughly 0.005\% fewer).

\paragraph{Methods}
In the amortised setting we follow \cref{algo:cond_dobsh}. In 90\% of the training steps, we pass a condition to the network. The other 10\% contains a mask consisting of only 0's.
For the reconstruction guidance method, we use a guidance term of $\gamma=10.0$.

\paragraph{Metrics}
We measure the performance of the methods using mean squared error (MSE) and the perceptual metric LPIPS.
Both these metrics compare the similarity between the original image (from which a patch was taken) and the conditional sample.
For each metric, we compute the mean across 64 test images and repeat the experiment 5 times to get error estimates.

\section{H-transform under soft constraints} \label{sec:genh}

% \begin{mdframed}[style=highlightedBox]
% \begin{repproposition}{prop:noisy}(Noisy conditioning)
% Given the following forward SDE:
% %
% \begin{align} \label{eq:vpsde2}
%     \dd \bX_t &= f_t(\bX_t) \,\dd t + \sigma_t\;\fwd{\bW_t}, \quad \bX_0 \sim  \distdata
% \end{align}
% it follows that the following reverse SDE with marginals $p_t$
% \begin{align} 
%     \bH_T &\sim  \law{\bX_T |\bX_0}\nonumber\\
%     \dd \bH_t &= \left( f_t(\bH_t) - \sigma^2_t (\nabla_{\bH_t} \ln p_t(\bH_t)+ \nabla_{\bH_t} \ln p_{y|t}(\mY=\m|\bH_t))\right)\,\dd t + \sigma_t\; \bwd{ \dd \rv{W}}_t, \label{eq:rev_sde2trans_22}
% \end{align}
% satisfies $\law{\bH_0} = p(\vx_0 | \mY=\m)$
% where $p_{y|t}(\mY=\m|\cdot) = \int p(\mY=\m|\vx_0)\bwd{p}_{0|t}(\vx_0 |\cdot) \dd \vx_0$.
% %
% \end{repproposition}
% \end{mdframed}
% 
\subsection{Proof of Proposition \ref{prop:noisy}}
\begin{proof}
Consider the following reference SDE starting at the posterior of interest:
\begin{align} %\label{eq:vpsde}
    \dd \bX_t &= f_t( \bX_t )\,\dd t + \sigma_t\;\ddf \bW_t, \quad \bX_0 \sim  \frac{p(\vy|\vx_0)\textcolor{magenta}{p_{\mathrm{data}}(\vx_0)}}{p(\vy)}
\end{align}

Now let us use $p_{t|y}(\vx|\vy) = \int p(\vx_t |\vx_0) \dd p(\vx_0|\vy)$ to denote the marginal of the above SDE and as before $p_t$ to denote the marginal of the reference starting at the data distribution. Then it follows that

\begin{align}
    p_{t|y}(\vx |\vy) &= p_t(\vx)p(\vy)^{-1} \int \textcolor{magenta}{\frac{\fwd{p}_{t|0}(\vx| \vx_0)}{p_t(\vx)}} p(\vy |\vx_0) \textcolor{magenta}{p_{\mathrm{data}}(\vx_0)} d\vx_0 \label{eq:post1}\\
    &=  p_t(\vx)p(\vy)^{-1} \int {\textcolor{magenta}{\bwd{p}_{0|t}(\vx_0| \vx)}} p(\vy |\vx_0)  d\vx_0 \label{eq:post2}
\end{align}
and thus the score of the reference starting at the posterior is given by:
\begin{align}
   \nabla_\vx \ln  p_{t|y}(\vx|\vy) = \nabla_\vx \ln  p_t(\vx) + \nabla_\vx \ln \int \bwd{p}_{0|t}(\vx_0| \vx) p(\vy |\vx_0)  d\vx_0  %= \nabla_\vx \ln  p_{t|y}(\vx |\vy)
\end{align}
Note that this remark highlights that the score used in DPS \citep{chung2022diffusion} (i.e. $\nabla_\vx \ln  p_{t|y}(\vx |\vy)$) is in fact the score of an OU process starting at $p(\vx_0|\vy)$ notice the cancellation going from Equations \ref{eq:post1} to \ref{eq:post2} was only possible since the \textcolor{magenta}{prior} in our target posterior is the initial distribution for the forward SDE (in their case an OU-process) with marginal $p_t$, these considerations are subtle yet important and omitted in prior works. Whilst this is akin the relationship motivated in DPS as $\nabla \ln p_{t|y}(\vx|\vy) = \nabla \ln p_t(\vx) + \nabla \ln p_t(\vy|\vx) $, DPS fails to convey that this is in fact the score of a VP-SDE with the posterior $p(\vx_0|\vy)$ as its initial distribution.

% Also notice that whilst similar simply stating $\nabla_\vx \ln  p_{t|y}(\vx |\vy) = \nabla_\vx \ln  p_{t}(\vx ) + \nabla_\vx \ln  p_{y|t}(\vy |\vx)$ is insufficient to prove 
\end{proof}

\ifcontrol

Despite the simplicity of the above sketch it will enable us to derive new estimators for the conditional score.
\subsection{Conditional fine-tuning - learning the generalised $h$-transform in noisy inverse problems}

Thanks to our formal framework in this section we develop a new VI objective for learning the conditional score in the noisy inverse problems setting. That is by minimising the following ELBO with respect to an additional fine-tuning network, one can learn the conditional score:
\begin{align}
   f^* =\argmin_{f} \E_\Q\left[\int_0^T \beta_t|| f(\bH_t) ||^2 \dd t\right] - \E_{\bH_0 \sim \Q_0} [\ln p(\vy| \bH_0)] \label{eq:stoch}
\end{align}
where $\bH_t$ follows the unconditioned score SDE with an added control $f$:
\begin{align} \label{eq:vpsderev_2}
&\bH_T \sim \law{\bX_T |\bX_0 \sim p(\vx_0|\vy)}\nonumber\\
    \dd \bH_t &= -\beta_t (\bH_t+ 2\nabla_{\bH_t}\ln p_t(\bH_t) + 2f(\bH_t)) \,\dd t + \sqrt{2\beta_t }\;\bwd{W}_t,
\end{align}
and $f^*_t(\vh) = \nabla_{\vh}\ln \E_{\bX_0 \sim p_{0 |t}(\cdot|\vh)}[p(\vy| \bX_0) ] =\nabla_{\vh}\ln p_{y|t}(\vy |\vh)$ coincides with the generalised $h$-transform. This objective provides a way to learn the conditioned SDE from the unconditioned one, without making Gaussian approximations. In practice, this can be achieved by fine-tuning a pre-trained unconditional score model to learn the conditional score. We leave empirical exploration of this objective for future work.

\subsubsection{Derivation and connection to stochastic control} \label{app:stoch_control}

In what follows we provide an equivalent formulation of the noisy inverse posterior sampling problem as a stochastic control objective. Enabling us a way to learn the conditional score by optimising a simple variational inference type problem.
\begin{proposition}
The following stochastic control problem
\begin{align}
   f^* =\argmin_{f} \E_\Q\left[\int_0^T \beta_t|| f(\bH_t) ||^2 \dd t\right] - \E_{\bH_0 \sim \Q_0} [\ln p(\vy| \bH_0)] \label{eq:stoch}
\end{align}
with 
\begin{align} \label{eq:vpsderev0}
&\bH_T \sim \law{\bX_T |\bX_0 \sim p(\vx_0|\vy)}\nonumber\\
    \dd \bH_t &= -\beta_t (\bH_t+ 2\nabla_{\bH_t}\ln p_t(\bH_t) + 2f(\bH_t)) \,\dd t + \sqrt{2\beta_t }\;\bwd{\dd \rv{W} }_t,
\end{align}
is minimised by the conditional score SDE in Equation \ref{eq:rev_sde_htrans}, that is
\begin{align}
    f^*_t(\vh) = \nabla_{\vh}\ln \E_{\bX_0 \sim p_{0 |t}(\cdot|\vh)}[p(\vy| \bX_0) ] =\nabla_{\vh}\ln p_{y|t}(\vy |\vh).
\end{align}
Furthermore, $f^*$ solves an associated half-bridge problem \citep{bernton2019schr} with the SDE in Equation \ref{eq:vpsde} as its reference process and $p(\vx_0|\vy)$ as its source distribution.
\end{proposition}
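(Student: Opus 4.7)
The plan is to recast the stochastic control objective as a path-space KL divergence whose minimiser is the conditional reverse SDE, and then read off $f^*$ by matching drifts. Define two reference path measures: let $\Pi$ denote the law of the unconditional reverse SDE (drift $-\beta_t(\bH_t+2\nabla\ln p_t)$, initial $\bH_T\sim\law{\bX_T}$, terminal $\bH_0\sim p_{\mathrm{data}}$), and let $\P^y$ denote the reverse of the forward OU \eqref{eq:vpsde} started at $p(\cdot\mid\vy)$, with drift $-\beta_t(\bH_t+2\nabla\ln p_t+2\nabla\ln p_{y\mid t})$ and terminal $\bH_0\sim p(\cdot\mid\vy)$, as in Corollary~\ref{col:noisy_ou}. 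Because both are time-reversals of the same forward OU differing only in their $t=0$ initial law, a direct comparison of their path densities yields
\[
\frac{\dd\P^y}{\dd\Pi}(\omega) \;=\; \frac{p(\omega_0\mid\vy)}{p_{\mathrm{data}}(\omega_0)} \;=\; \frac{p(\vy\mid\omega_0)}{p(\vy)}.
\]

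First I would apply Girsanov's theorem to $\Q$ against $\Pi$: with added drift $-2\beta_t f(\bH_t)$ and diffusion $\sqrt{2\beta_t}$, the standard $\|u_t\|^2/\sigma_t^2$ computation yields $\mathrm{KL}(\Q\,\|\,\Pi) = \E_\Q[\int_0^T \beta_t\|f(\bH_t)\|^2\,\dd t]$, so the quadratic control cost is exactly the path-space KL to the unconditional reverse measure. Second, expanding $\mathrm{KL}(\Q\,\|\,\Pi) = \mathrm{KL}(\Q\,\|\,\P^y) + \E_\Q[\ln(\dd\P^y/\dd\Pi)]$ and substituting the Radon--Nikodym identity above gives the key rearrangement
\[
\E_\Q\!\left[\int_0^T \beta_t\|f(\bH_t)\|^2\,\dd t\right] - \E_{\Q_0}[\ln p(\vy\mid\bH_0)] \;=\; \mathrm{KL}(\Q\,\|\,\P^y) - \ln p(\vy).
\]
Since $-\ln p(\vy)$ is constant in $f$, the minimisation is equivalent to $\min_f \mathrm{KL}(\Q\,\|\,\P^y)$, attained uniquely at $\Q=\P^y$. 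Matching the drift of $\Q$ with that of $\P^y$ then identifies $2f^*_t(\vh) = 2\nabla_\vh\ln p_{y\mid t}(\vy\mid\vh)$, proving the first claim.

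The half-bridge assertion follows immediately: the rearranged identity exhibits the objective as, up to an additive constant, the Lagrangian of the constrained problem $\min_\Q\mathrm{KL}(\Q\,\|\,\Pi)$ subject to the marginal constraint $\Q_0 = p(\cdot\mid\vy)$, which is by definition the half-bridge with the forward SDE \eqref{eq:vpsde} as reference process and $p(\vx_0\mid\vy)$ as source distribution.

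The main obstacle is boundary bookkeeping at $t=T$: the measure $\Q$ is prescribed to start at $\law{\bX_T\mid\bX_0\sim p(\cdot\mid\vy)}$ while the standard unconditional $\Pi$ starts at $\law{\bX_T}$, so a naive Girsanov step produces an additional $\mathrm{KL}$ term between these two initial laws. This is handled either by (i) re-initialising $\Pi$ to the same initial law as $\Q$ and verifying that the time-$0$ Radon--Nikodym factor is unchanged (which holds because the corresponding forward measures differ only by the density ratio $p(\cdot\mid\vy)/p_{\mathrm{data}}$ at $t=0$), or (ii) appealing to the exponential OU convergence invoked in Corollary~\ref{col:noisy_ou}, under which $\|\law{\bX_T\mid\bX_0\sim p(\cdot\mid\vy)} - \law{\bX_T}\|_{\mathrm{TV}} = \gO(e^{-T/\bar\beta})$ renders the boundary discrepancy negligible.
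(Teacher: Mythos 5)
Your proof is correct in substance but takes a genuinely different, more self-contained route than the paper's. The paper passes through three external pieces of machinery: it sets up the half-bridge $\min_{\Q:\Q_0=p(\cdot\mid\vy)}\KL(\Q\,\|\,\P)$, invokes the half-bridge lemma to obtain the unconstrained form $\dd\Q^* = \dd\P\cdot p(\vy\mid\cdot)/p(\vy)$, applies Girsanov to identify the KL with the quadratic control cost, and finally characterises the minimiser via the HJB / Hopf--Cole machinery of stochastic control (Theorem~2.2 of N\"usken--Richter). You instead construct the candidate optimum $\P^y$ directly as the time-reversal of the forward OU started at the posterior, compute the path Radon--Nikodym derivative $\dd\P^y/\dd\Pi = p(\vy\mid\omega_0)/p(\vy)$ from the observation that both measures are time-reversals of the same forward SDE differing only in initial law, and rewrite the objective as $\KL(\Q\,\|\,\P^y)$ up to additive constants. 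The minimiser $\Q=\P^y$ and hence $f^*_t = \nabla\ln p_{y\mid t}$ then fall out by matching drifts, with no appeal to HJB or Hopf--Cole. Your route is cleaner and makes the identity of the minimiser transparent; the paper's route leans on established stochastic-control and Schr\"odinger-bridge results.

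Two small corrections. First, describing the objective as ``the Lagrangian'' of the constrained problem is imprecise: $-\E_{\Q_0}[\ln p(\vy\mid\bH_0)]$ is not a multiplier-times-constraint penalty but exactly the log-density-ratio $\ln\!\bigl(\dd p(\cdot\mid\vy)/\dd p_{\mathrm{data}}\bigr)$ up to the constant $\ln p(\vy)$, and the unconstrained problem is an \emph{exact} reformulation of the half-bridge (zero gap), not a relaxation. Second, your boundary-bookkeeping fix (i) is wrong as stated: re-initialising $\Pi$ at time $T$ to match $\Q_T$ multiplies the path measure by $\Q_T(\omega_T)/\Pi_T(\omega_T)$, so the time-$0$ Radon--Nikodym factor against $\P^y$ is \emph{not} unchanged. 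The cleanest and in fact exact fix is simpler than either (i) or (ii): Girsanov with mismatched terminal laws gives $\E_\Q[\int_0^T\beta_t\|f\|^2\,\dd t] = \KL(\Q\,\|\,\Pi) - \KL(\Q_T\,\|\,\Pi_T)$, and the extra boundary term $\KL(\Q_T\,\|\,\Pi_T)$ is a constant independent of $f$ because $\Q_T$ is prescribed. It therefore shifts the objective by a constant and does not move the argmin; this renders the argument exact for finite $T$, which is a modest improvement over the paper's appeal to $\law{\bX_T}\approx\gN(0,I)$.
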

\begin{proof}
The derivation for this  objective is inspired by the sequential Bayesian learning scheme proposed in Lemma 1, Appendix B of \cite{vargas2021bayesian}. 

Let $\P$ denote the distribution for the VP-SDE in Equation \ref{eq:vpsde}. Now consider the following variational problem termed a half-bridge \citep{bernton2019schr,de2021diffusion, vargas2021solving,vargasshro2021}.
\begin{align}\label{eq:half}
   \Q^*= \argmin_{\Q: \Q_{0} = p(\vx_0|\vy)} \KL(\Q || \P)
\end{align}
where the constraint enforces that at time $0$ we hit the target posterior $p(\vx|\vy)$ then via standard results in half bridges we know that the above optimisation problem has an unconstrained formulation (e.g. see \cite{vargas2023denoising}) that is $\dd\Q^* = \dd\P \frac{\dd p(\vx|\vy)}{\dd \P_0} $
Now following \citep{vargas2021bayesian} we notice that we can cancel the $p_{\mathrm{data}}$ prior in the posterior term :
\begin{align}
 \dd\P   \frac{\dd p(\vx|\vy)}{\dd \P_0}  = \dd\P \frac{\dd p(\vx|\vy)}{\dd p_{\mathrm{data}}} = \dd \P \frac{\dd p(\vy | \vx)}{\dd p(\vy)}
\end{align}
and thus:
\begin{align}
   \Q^*= \argmin_{\Q} \KL(\Q || \P) + \E_{\bH_0 \sim \Q_0} [\ln p(\vy| \bH_0)] \label{eq:stoch}
\end{align}
with $\Q_T = \law{\bX_T} \approx \gN(0, I)$ when $\bX_0 \sim p(\vx|\vy)$.  Furthermore notice we can parametrise $\P$ as :
\begin{align} \label{eq:vpsderev}
    \dd \bX_t &= -\beta_t (\bX_t+ 2\nabla_{\bX_t}\ln p_t(\bX_t)) \,\dd t + \sqrt{2\beta_t }\; \fwd{\dd \rv{W}_t }, \quad \bX_0 \sim \law{\bX_T |\bX_0 \sim p_{\mathrm{data}}}
\end{align}

and thus $\Q$ as 
\begin{align} \label{eq:vpsderev}
&\bH_T \sim \law{\bX_T |\bX_0 \sim p(\vx_0|\vy)}\nonumber\\
    \dd \bH_t &= -\beta_t (\bH_t+ 2\nabla_{\bH_t}\ln p_t(\bH_t) + 2f(\bH_t)) \,\dd t + \sqrt{2\beta_t }\;\bwd{\dd \rv{W}_t },
\end{align}
then via Girsanov Theorem we can re-express Equation \ref{eq:stoch} as:
\begin{align}
   f^* =\argmin_{f} \E_\Q\left[\int_0^T \beta_t|| f(\bH_t) ||^2 \dd t\right] - \E_{\bH_0 \sim \Q_0} [\ln p(\vy| \bH_0)] \label{eq:stoch}
\end{align}
Now noticing that Equation \ref{eq:stoch} is a standard stochastic control problem \citep{nusken2021solving, kappen2005linear} we can characterise its minimiser as (using Theorem 2.2 in \cite{nusken2021solving} and the Hopf-Cole transform \citep{fleming2012deterministic}): 
\begin{align}
    f^*_t(\vh) = \nabla_{\vh}\ln \E_{\bX_0 \sim p_{0 |t}(\cdot|\vh)}[p(\vy| \bX_0) ]
\end{align}
and thus the SDE in Equation \ref{eq:rev_sde2trans} hits the target posterior $p(\vy|\vx_0)$ at time $0$ as it is the minimiser of half-bridge posed in Equation \ref{eq:half}.
\end{proof}
Interestingly the above objective is akin to the methodology in \cite{jing2022torsional} where unconstrained samples are conditioned to follow a Boltzmann distribution.
%
%
% \subsubsection{Discretisation of inverse problem objective}
\paragraph{Discretisation of inverse problem objective}
Following \cite{vargas2023denoising} we will discretise the objective presented in \ref{eq:stoch}. Let us consider the pre-trained score SDE with an added tuning network:
\begin{align} \label{eq:vpsderev2}
&\bH_T \sim \gN(0,I)\nonumber\\
    \dd \bH_t &= -\beta_t (\bH_t+ 2s_{\theta}(\bH_t) + 2f_{\phi}(\bH_t)) \,\dd t + \sqrt{2\beta_t }\;\bwd{\dd \rv{W} }_t,
\end{align}
now using an exponential-like discretisation \citep{de2022convergence} (Ideally we want to discretise in the same way we trained the model):
\begin{align} \label{eq:vpsderev2disc}
&\bH_{t_K} \sim \gN(0,I)\nonumber\\
   \bH_{t_{k-1}} &= \left(\sqrt{1- \alpha_{k}}\bH_{t_k}+ 2(1-\sqrt{1- \alpha_{k}})\left(s_{\theta^*}(\bH_{t_k}) + f_{\phi}(\bH_{t_k})\right) \right) + \sqrt{\alpha_{k}}\varepsilon_k,
\end{align}
where $\alpha_k = 1- \exp\left(2\int_{t_{k-1}}^{t_k}\beta_s \dd s\right)$, note we will denote the distribution of the above discrete time chain as $q_\phi$. Now if we follow the sketch in Proposition 3 of \cite{vargas2023denoising} the discretised objective then becomes:
\begin{align}
  \argmin_\phi  \E_{\bH \sim q_\phi}\left[2 \sum_{k=1}^K \frac{\lambda_k^2}{\alpha_k}|| f_{\phi}(k, \bH_{t_k})||^{2} -\ln p(\vy|\bH_{0})\right] \label{eq:objdisc}
\end{align}
where $\lambda_k = 1-\sqrt{1-\alpha_k}$. For a more stable/simple objective following \citep{vargas2023denoising} we can make the approximation $\lambda_k = 1-\sqrt{1-\alpha_k}\approx \alpha_k / 2$ for small time steps. This leads to the following iteration (which is possibly more akin to the training update being used):
\begin{align} \label{eq:vpsderev3disc}
&\bH_{t_K} \sim \gN(0,I) \nonumber\\
   \bH_{t_{k-1}} &= \left(\sqrt{1- \alpha_{k}}\bH_{t_k}+ \alpha_k\left(s_{\theta^*}(\bH_{t_k}) + f_{\phi}(\bH_{t_k})\right) \right) + \sqrt{\alpha_{k}}\varepsilon_k,
\end{align}
and objective:
\begin{align}
  \argmin_\phi  \E_{\bH \sim q_\phi}\left[\sum_{k=1}^K \frac{\alpha_k}{2}|| f_{\phi}(k, \bH_{t_k})||^2 - \ln p(\vy|\bH_{0})\right].
\end{align}

\textbf{Discrete Time Intuition}

For further intuition, we will provide a discrete-time derivation as to how this objective arises. Consider the following discrete-time VP-SDE (i.e. $p_{k+1|k}(\bH_{t_{k+1}}|\bH_{t_k}) = \gN(\bH_{t_{k+1}} |\sqrt{1- \alpha_{k}}\bH_{t_k} , \alpha_k )$) starting from the posterior:
\begin{align}
    p(\bH_{t_{1}:t_{k}}) = \frac{p(\vy |\bH_0)p_{\mathrm{data}}(\bH_0)}{p(\vy)}\prod_{k=1}^{K} p_{k+1|k}(\bH_{t_{k+1}}|\bH_{t_k})
\end{align}
Now applying Bayes rule $p_{k+1|k}(\bH_{t_{k+1}}|\bH_{t_k}) =\frac{p_{k|k+1}^{\mathrm{uncnd}}(\bH_{t_k}|\bH_{t_{k+1}})p^{\mathrm{uncnd}}_{k+1}(\bH_{t_{k+1}})}{p^{\mathrm{uncnd}}_{k}(\bH_{t_k})}$ and telescoping to cancel the marginals we have:
\begin{align}
    p(\bH_{t_{1}:t_{k}}) &= \frac{p_K^{\mathrm{uncnd}}(\bH_T)p(\vy |\bH_0){\not p_{\mathrm{data}}(\bH_0)}}{p(\vy)\not p_{\mathrm{data}}(\bH_0)}\prod_{k=1}^{K} p_{k+1|k}^{\mathrm{uncnd}}(\bH_{t_{k+1}}|\bH_{t_k}) \\
     &= \frac{p_K^{\mathrm{uncnd}}(\bH_T)p(\vy |\bH_0)}{p(\vy) }\prod_{k=1}^{K} p_{k+1|k}^{\mathrm{uncnd}}(\bH_{t_{k+1}}|\bH_{t_k}),
\end{align}
where $p_{k|k+1}^{\mathrm{uncnd}}(\bH_{t_k}|\bH_{t_{k+1}})$ is the transition density of the unconditional score SDE and $p^{\mathrm{uncnd}}_{k}(\bH_{t_k})$ correspond to its marginals. Now we would like to learn a backwards process that matches the above process (reverses the VP-SDE starting from the posterior).  We can do so by minimising the KL:
\begin{align}
    \KL(q^\phi || p) \propto \E_{q}\left[\ln \frac{\prod_k q^{\phi}_{k|k+1}(\bH_{t_k}|\bH_{t_{k+1}})}{\prod_k p_{k|k+1}^{\mathrm{uncnd}}(\bH_{t_k}|\bH_{t_{k+1}})} - \ln p(\vy |\bH_0)\right]
\end{align}
where we can approximate the score transition via: 
\begin{align}
    p_{k-1|k}^{\mathrm{uncnd}}(\bH_{t_k}|\bH_{t_{k+1}}) \approx \gN(\bH_{t_{k-1}} |\sqrt{1- \alpha_{k}}\bH_{t_k}+ 2(1-\sqrt{1- \alpha_{k}})s_{\theta^*}(\bH_{t_k}) , \alpha_k ) \nonumber
\end{align}
and parametrise the new conditional denoiser as
\begin{align}
  q_{k-1|k}^{\phi}(\bH_{t_k}|\bH_{t_{k+1}}) =  \gN(\bH_{t_{k-1}} |\sqrt{1- \alpha_{k}}\bH_{t_k}+ 2(1-\sqrt{1- \alpha_{k}})\left(s_{\theta^*}(\bH_{t_k}) + f_{\phi}(\bH_{t_k})\right) , \alpha_k ) \nonumber
\end{align}
making these two substitutions will lead to the objective in Equation \ref{eq:stoch}.
\fi

\end{document}